  \providecommand\BibTeX{{%
    \normalfont B\kern-0.5em{\scshape i\kern-0.25em b}\kern-0.8em\TeX}}}
\newcommand{\akparen}[1]{\left( #1 \right)}
\newcommand{\akbrack}[1]{\left[ #1 \right]}
\newcommand{\akset}[1]{\left\{ #1 \right\}}
\DeclareMathOperator*{\argmin}{arg\,min}
\DeclareMathOperator{\EX}{\mathbb{E}}
\DeclarePairedDelimiter\floor{\lfloor}{\rfloor}
\newtheorem{theorem}{Theorem}
\begin{document}

\title{
Effective Mutation Rate Adaptation\\through Group Elite Selection
}


\author{Akarsh Kumar}
\affiliation{%
  \institution{The University of Texas at Austin}
  \streetaddress{1 Th{\o}rv{\"a}ld Circle}
  \city{Austin} \state{Texas} \country{USA}
  }
\email{akarshkumar0101@gmail.com}

\author{Bo Liu}
\affiliation{%
  \institution{The University of Texas at Austin}
  \city{Austin} \state{Texas} \country{USA}
}
\email{bliu@cs.utexas.edu}

\author{Risto Miikkulainen}
\affiliation{%
  \institution{The University of Texas at Austin}
  \institution{and Cognizant AI Labs}
  \streetaddress{30 Shuangqing Rd}
  \city{Austin} \state{Texas} \country{USA}
  }
\email{risto@cs.utexas.edu}

\author{Peter Stone}
\affiliation{%
  \institution{The University of Texas at Austin}
  \institution{and Sony AI}
  \streetaddress{8600 Datapoint Drive}
  \city{Austin} \state{Texas} \country{USA}
  }
\email{pstone@cs.utexas.edu}


\renewcommand{\shortauthors}{Kumar, et al.}

\begin{abstract}
Evolutionary algorithms are sensitive to the mutation rate (MR);
no single value of this parameter works well across domains.
Self-adaptive MR approaches have been proposed 
but they tend to be brittle: Sometimes they decay the MR to zero, thus halting evolution.
To make self-adaptive MR robust, this paper introduces the Group Elite Selection of Mutation Rates (GESMR) algorithm. 
\linebreak
GESMR co-evolves a population of solutions and a population of MRs, such that each MR is assigned to a group of solutions.
The resulting \emph{best} mutational change in the group, instead of average mutational change, is used for MR selection during evolution, thus avoiding the vanishing MR problem.
With the same number of function evaluations
and with almost no overhead, GESMR converges faster
and to better solutions than previous approaches on a wide range of continuous test optimization problems. GESMR also scales well to high-dimensional
neuroevolution for supervised image-classification tasks and for reinforcement learning control tasks. 
Remarkably,
GESMR produces MRs that are \emph{optimal in the long-term}, as demonstrated through a comprehensive look-ahead grid search.
Thus, GESMR and its theoretical and empirical analysis demonstrate how
self-adaptation can be harnessed to improve performance in several
applications of evolutionary computation.

\end{abstract}

\begin{CCSXML}
<ccs2012>
 <concept>
  <concept_id>10010520.10010553.10010562</concept_id>
  <concept_desc>Computer systems organization~Embedded systems</concept_desc>
  <concept_significance>500</concept_significance>
 </concept>
 <concept>
  <concept_id>10010520.10010575.10010755</concept_id>
  <concept_desc>Computer systems organization~Redundancy</concept_desc>
  <concept_significance>300</concept_significance>
 </concept>
 <concept>
  <concept_id>10010520.10010553.10010554</concept_id>
  <concept_desc>Computer systems organization~Robotics</concept_desc>
  <concept_significance>100</concept_significance>
 </concept>
 <concept>
  <concept_id>10003033.10003083.10003095</concept_id>
  <concept_desc>Networks~Network reliability</concept_desc>
  <concept_significance>100</concept_significance>
 </concept>
</ccs2012>
\end{CCSXML}

\ccsdesc[100]{Computing methodologies~Genetic algorithms}

\keywords{Genetic algorithms, neuroevolution, adaptation/self-adaptation, 
\linebreak
mutation operators, parameter control}


\maketitle

\section{Introduction}
Biological evolution has produced an incredible diversity of life that is seen everywhere.
In this process, the solutions and the
\linebreak
mechanisms co-evolve end-to-end,
including the mutation rate
\linebreak
\cite[MR;][]{Metzgar2000-ji}.
Self-adaptation of MRs (SAMR)
is a technique common in the literature
of genetic algorithms (GA) that
encapsulates this idea of end-to-end
evolution of the MR along
with the individuals
\cite{Meyer-Nieberg2007-nz,Back1992-al,Smith1996-cg,Dang2016-fp}.
The idea is to assign each individual its
own MR, creating a pair.
The pairs are then evolved
end-to-end using the assigned MR
for mutating the individual and a ``meta" MR
for mutating the assigned MR.

However, this approach often
runs into the problem that
the MRs produced decay to zero,
causing evolution to stop at a sub-optimal value.
If instead the MR were fixed at some
moderate value, evolution would continue and find a better function value
\cite{Clune2008-lh, Rudolph2001-zf,Glickman2000-mr}.
This premature convergence can be attributed to
the fact that most mutations
hurt the fitness of an individual \cite{Clune2008-lh},
and thus an effective way for an individual to preserve its fitness into the next
generation is to have no mutation.
Thus, SAMR ignores the long-term goal of evolution to explore the fitness landscape and find
better solutions in future generations \cite{Clune2008-lh}.

To counteract this effect, 
this paper proposes a novel GA based on
supportive co-evolution \cite{Goldman2012-hc}
of solutions and MRs, entitled
\linebreak
Group Elite Selection of Mutation Rates (GESMR).
After assigning each MR to a group of solutions,
the solutions are evolved using that
MR, and the MRs are evolved according to the
\emph{best} change in function value from the MR's solution group,
defined as the ``group elite".
By targeting the MR that produces the \emph{best}
change in function value, given many mutation samples,
GESMR can mitigate the vanishing MR problem.
Additionally, GESMR is straightforward to implement and 
requires no more function evaluations than a fixed MR GA, 
and thus can be applied to a wide range of GA problems.

In prior work, a related approach using the idea of group elites was formulated as a multi-armed bandit problem and applied to entire genetic operators in an ad-hoc manner
\cite{Fialho2008-ud, Whitacre2009-uv}.
In contrast, this paper demonstrates that the approach is most effective when focused on MRs, and it also makes it possible to understand this result both empirically and theoretically.

Evaluation of GESMR is performed on common
benchmark test optimization problems from the GA literature.
To show that the method scales well to harder problems, it is also evaluated on
neuroevolution for image classification in the MNIST/Fashion-MNIST domain
and on reinforcement learning for control in the
CartPole, Pendulum, Acrobot, and MountainCar domains.
For comparison, results of
several adaptive MR algorithms including 
an oracle optimal fixed MR,
an oracle look-ahead MR (that uses foresight to determine MR),
self-adaptive MR,
the multi-armed bandit method \cite{Fialho2008-ud},
and some common heuristic methods \cite{Rechenberg1978-np}
are also reported.

GESMR outperforms other algorithms in most tasks.
Even when SAMR prematurely converges,
like in problems with especially 
\linebreak
rugged fitness landscapes
\cite{Clune2008-lh}, GESMR does not.
As a matter of fact, GESMR performs as well as the oracle look-ahead MR 
in function value and even matches the MR to the empirically estimated \emph{long-term optimal MR}.
To explain why, the statistical distribution
of the change in function value for a spectrum of MRs
for different function landscapes is empirically analyzed and visualized.
This analysis shows that SAMR is minimizing an MR objective
whose optimal MR is zero in rugged landscapes, while 
GESMR is minimizing an objective whose optimal MR is nonzero.


\section{Related Work}

Research on mutation rates (MRs)
is one of the most studied
sub-fields of genetic algorithms
\cite{Aleti2016-wn, Eiben1999-ez, Karafotias2015-cr, Kramer2010-ia, Hassanat2019-iz, Back1996-tv}. 
\paragraph{Fixed MRs:}
Lots of theoretical and empirical work has been done
on finding the optimal fixed MR for specific problems 
\cite{Greenwell1995-kx, Bottcher2010-xw},
finding heuristics like the MR should be proportional to  $1/L$ where 
$L$ is the length of the genotype \cite{Ochoa2002-ni, Doerr2019-ol}.
Evolutionary bilevel optimization tries to find the optimal evolutionary parameters, including MR,
by running an inner evolution with an outer loop searching over parameters \cite{Sinha2018-rd, Liang2015-dg}.
However, it is commonly known that the optimal MR
is constantly changing during evolution \cite{Patnaik1986-qb}.

\paragraph{Deterministic MRs:}
Deterministic MRs are common but these are ad hoc 
functions to change
the MR as a function of the number of generations,
and may not generalize to unseen problems with different landscapes
\cite{Aleti2016-wn}.

\paragraph{Adaptive MRs:}
Adaptive MRs are also common 
\cite{Thierens2002-dq, Srinivas1994-by, Patnaik1986-qb, Doerr2019-ol, Sewell2006-gp}
but these rely on another ad hoc system to determine
how to alter the MR given feedback from the evolution.
A common technique is to maintain a MR that
produces mutations of which only one-fifth are beneficial 
\cite{Karafotias2015-cr, Rechenberg1978-np},
by increasing MR when the percentage of successful mutations is 
greater than $1/5$ (and vice versa).
Although this technique is based on empirical findings,
it is ad-hoc, does not generalize to different landscapes,
requires a hardcoded threshold, and has been shown to lead to premature convergence 
when elitism is employed \cite{Rudolph2001-zf}.

\paragraph{Self-Adaptive MRs:}
Perhaps the most promising and 
evolutionarily plausible class
of adapting MRs is that of 
self-adapting MRs 
\cite{Kramer2010-ia, Aleti2016-wn, Back1992-al, Gomez2004-sh, Thierens2002-dq}.
This technique concatenates an MR to each individual
and evolves the MRs and individuals in one end-to-end evolutionary process.
However, many previous works have shown this process
to be brittle and lead to premature convergence of evolution
as the MRs decay and vanish 
\cite{Rudolph2001-zf, Glickman2000-mr, Clune2008-lh, Meyer-Nieberg2007-nz}.
In the instances where self-adapting MRs succeed,
the authors attribute the cause to be from a relatively 
smooth fitness landscape
\cite{Clune2008-lh, Glickman2000-mr}, 
or high selection pressure
\cite{Maschek2010-ap}.
The cause of general premature convergence in rugged landscapes is attributed to
the fact that most mutations are deleterious,
causing self-adaptation to prefer solutions that mutate
less and preserve the fitness of each individual
\cite{Clune2008-lh, Glickman2000-mr}.
\citet{Clune2008-lh} mention that, in this way, 
evolution is short-sighted: it cannot adapt MRs 
to be optimal for the long-term,
only optimizing for short-term performance.

\paragraph{Outlier-Based MRs:}
Some works have proposed looking
at the best mutation produced by a certain mutation operator
to judge the quality of the operator \cite{Fialho2008-ud, Whitacre2009-uv}, 
with the motivation that an operator that produces 
infrequent large fitness gains is
preferred to one that produces frequent small fitness gains.
However, these works model the operator selection
as a multi-armed bandit problem.
This technique is not only unnatural to evolution,
it is also limited by the expressiveness of the arms used
and assumes independent arms, thus failing to capture 
the continuous spectrum that the MR exists in.

\paragraph{CMA-ES:}
One of the most successful forms of adapting the
\linebreak
spread of a population during an evolutionary search is with
Covariance Matrix Adaptation Evolution Strategy (CMA-ES) \cite{Hansen2016-qa}.
It relies on maintaining a covariance matrix, which requires quadratic time and space in the solution vector length. Thus, CMA-ES does not scale to larger problems like deep neuroevolution with millions of parameters \cite{Such2017-rn}. In contrast, GESMR and GAs in general are linear wrt.\ solution length.

\section{Method}

\begin{figure*}
    \centering
    \includegraphics[width=\textwidth]{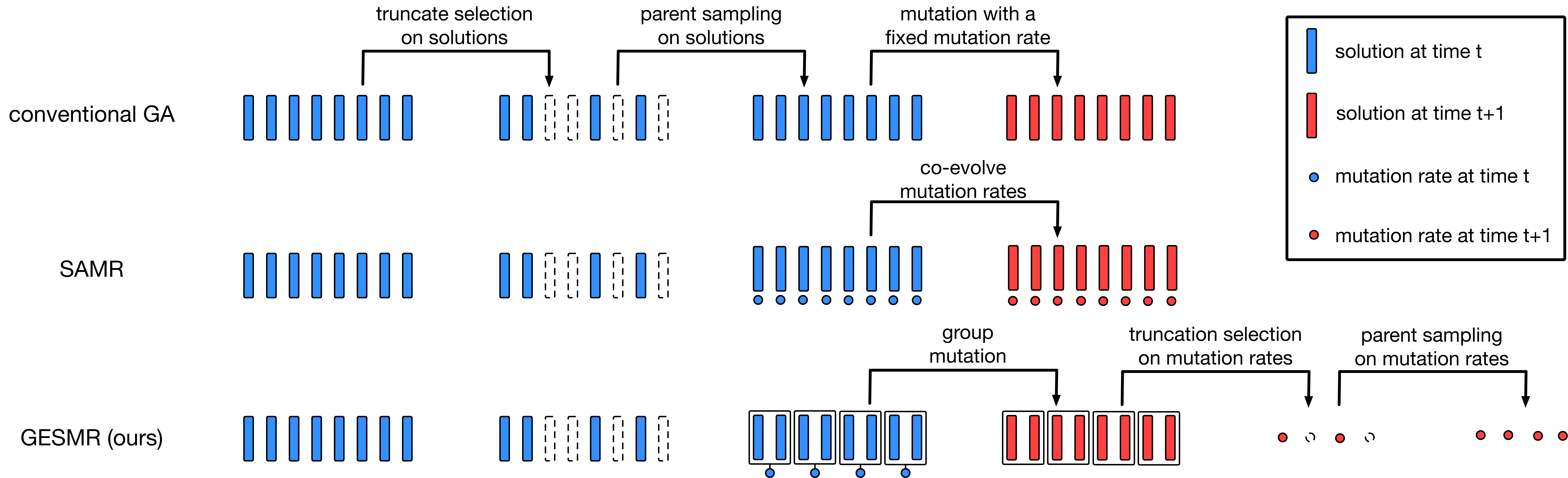}
    \caption{
    Comparison of GESMR against a fixed MR GA and SAMR.
    Fixed MR GA only evolves the solution with a given MR.
    SAMR evolves pairs of solutions and MRs.
    GESMR co-evolves a population of solutions 
    and a population of MRs separately.
    Each MR is assigned a group 
    and the MRs are evolved using the best function value 
    gain in the MR's corresponding group.
    }
    \label{fig:viz_algo}
\end{figure*}
\label{sec:method}
This section first provides the formal problem definition,
a discussion of the general class of genetic algorithms,
and then briefly describes a previous adaptive mutation rate (MR)
method and its associated vanishing MR problem.
Finally this section proposes the Group Elite Selection of Mutation Rates (GESMR) 
algorithm that addresses this problem with better performance and almost no
\linebreak
extra overhead.

\subsection{Problem Formulation}
\label{sec:formulation}
Consider the general optimization problem where the goal is to find the best decision variable $x^* \in \mathbb{R}^d$ that minimizes a target function $f$
(e.g. the negative fitness function in the genetic algorithm literature). 
The objective is therefore
\begin{equation}
    \argmin_{x \in \mathbb{R}^d} f(x).
    \label{eq:objective}
\end{equation}

\subsection{Genetic Algorithms and the Mutation Rate}
A genetic algorithm (GA) evolves a population of $N+1$ candidate solutions/individuals $x_0, \dots, x_N$ 
over time that progressively minimize the objective in Eq.~\ref{eq:objective}.
At each evolution time step $t$, the current population is 
$\{x^{(t)}_i\}_{i=0}^{N}$. 

To produce the next generation, a GA consists of 
1) selection of individuals, 
2) mutation of individuals, and 
3) crossover of individuals.

The common truncation selection method with one elite is used in this paper.
Truncation selection creates a new set of $N+1$ solutions
by keeping the single best ``elite" solution from the population (known as \emph{elitism})
and uniformly sampling the rest of the $N$ solutions
from the top $\eta_x$ portion of the population with replacement
(better solution has lower $f(x)$ value)
\cite{Such2017-rn}.



Since it is a common way to mutate a continuous genotype $x$ \cite{Such2017-rn},
the Gaussian mutation operator $M: \mathbb{R}^d \rightarrow \mathbb{R}^d$ is used, 
which produces $x'$ with
\begin{equation}
    x' \sim M(x; \sigma) \triangleq x + \sigma \epsilon, ~~\text{and}~~ \epsilon \sim \mathcal{N}(0, I).
\end{equation}
where $\mathcal{N}(0, I)$ denotes a standard multi-variate normal distribution in $\mathbb{R}^d$.
$\sigma \in \mathbb{R}_{\geq0}$ represents the \textbf{mutation rate} (MR), which constrains how different $x'$ could be from $x$. 

Crossover is used to mix information between solutions,
essentially allowing traits to be transferred to another solution.
For the sake of simplicity and to isolate the mutation operator, 
which is the main focus of this work,
no crossover operator is used
since crossover is not a necessary mechanism in GAs
\cite{Such2017-rn}.

For conventional GA algorithms, a fixed MR is chosen a priori based on the user's preference or prior knowledge.
Clearly, a too small $\sigma$ will slow down evolution
and a too large $\sigma$ will tend towards random search, a tuned $\sigma$ is needed.
It has also been shown that the optimal $\sigma$ changes over the course of evolution, 
e.g. a small $\sigma$ is often needed to ``fine tune" 
the solutions at the end of evolution \cite{Cervantes2009-me}.
As a result, the adaptive MR field studies how to dynamically adapt this $\sigma$ for faster learning and better convergence.
Among previous adaptive MR methods, a well-known and commonly used method 
is the self-adaptation of MR (SAMR)
\cite{Kramer2010-ia, Aleti2016-wn, Back1992-al, Gomez2004-sh, Thierens2002-dq}. 
This method attaches to each solution $x_i$ its own MR, $\sigma_i$.
These pairs $\{(x_i, \sigma_i)\}$ are then evolved, by selection on the pairs
and mutating the $x_i$ using $\sigma_i$ and mutating $\sigma_i$ using an external fixed meta MR $\tau$.

In practice, a well-known drawback of SAMR is that the MRs produced could 
prematurely converge to zero over time \cite{Rudolph2001-zf, Clune2008-lh, Glickman2000-mr}, 
which is referred to here as the \textbf{vanishing mutation rate problem} (VMRP).
One might try to simply clip the MR to a lower bound,
but a single lower bound that maintains exploration early on while still
allowing for fine tuning later may not exist \cite{Cervantes2009-me}.
Therefore, there exists a need for a better adaptive MR strategy.

\subsection{Group Elite Selection of Mutation Rates}
\label{sec:algorithm}

\begin{algorithm}[tb]
\caption{One step of GESMR}
\label{alg:algo}
\textbf{Input}: current solutions $\{x^{(t)}_i\}_{i=0}^N$, current mutation rates $\{\sigma^{(t)}_k\}_{k=1}^K$, the selection rates $\eta_x, \eta_\sigma$, and the meta mutation rate, $\tau$.\\
\textbf{Output}: next generation of solutions $\{x^{(t+1)}_i\}_{i=0}^N$ and mutation rates $\{\sigma^{(t+1)}_k\}_{k=1}^K$.
\begin{algorithmic}[1] 
\STATE // 1. Evolve the solutions
\STATE $\{\hat{x}^{(t)}_i\}_{i=0}^N \gets$ sort $\{x^{(t)}_i\}_{i=0}^N$ with ascending $f(\hat{x}^{(t)}_i)$
\STATE Generate $\{\tilde{x}_i^{(t)}\}_{i=0}^N$ according to 
Eq.~\ref{eq:select_solutions}
\COMMENT{Selection}
\STATE Generate $\{x_i^{(t+1)}\}_{i=0}^N$ according to Eq.~\ref{eq:mutate_solutions}\COMMENT{Mutation}
\STATE // 2. Evolve the mutation rates
\STATE Calculate $\Delta_k^{(t)}$ according to Eq.~\ref{eq:delta_algo}
\COMMENT{MR worth}
\STATE $\{\hat{\sigma}_k^{(t)}\}_{k=1}^K \gets$ sort $\{\sigma_k^{(t)}\}_{k=1}^K$ with ascending 
$\Delta_k^{(t)}$
\STATE Generate $\{\tilde{\sigma}_k^{(t)}\}_{k=1}^K$ according to 
Eq.~\ref{eq:select_mrs}
\COMMENT{Selection}
\STATE Generate $\{\sigma_k^{(t+1)}\}_{k=1}^K$ according to Eq.~\ref{eq:mutate_mrs}\COMMENT{Mutation}

\RETURN $\{x^{(t+1)}_i\}_{i=1}^N$ and $\{\sigma^{(t+1)}_j\}_{j=1}^K$
\end{algorithmic}
\end{algorithm}

This section presents Group Elite Selection of Mutation Rates (GESMR),
to adapt MRs on the fly, along with empirical evidence that 
GESMR mitigates the VMRP and outperforms previous adaptive MR methods.
For visualization of GESMR, refer to Fig.~\ref{fig:viz_algo}.

GESMR keeps a set of $K$ positive scalar MRs $\{\sigma_k\}_{k=1}^K$, where $N \equiv 0 \pmod{K}$, and co-evolves them with the $N+1$ candidate solutions, so that the $\sigma$s do not decay to zero.

At each optimization step $t$, 
the current population, $\{x_i^{(t)}\}_{i=0}^N$ 
is first sorted in ascending order of $f(x_i^{(t)})$, 
giving $\{\hat{x}_i^{(t)}\}_{i=0}^N$.
Truncation selection with one elite is applied
to get the next generation parents,
$\{\tilde{x}_i^{(t)}\}_{i=0}^N$, with
\begin{equation}
    \tilde{x}_i^{(t)} =
    \begin{cases}
        \hat{x}_0^{(t)} \qquad &i=0\\
        \sim \mathcal{U}\{\hat{x}_0^{(t)}, \dots, \hat{x}_{m-1}^{(t)}\} \qquad &i=1,\dots,N
    \end{cases}
    \label{eq:select_solutions}
\end{equation}
and $m=\eta_xN$ (number of solutions for parent selection).

Then, the \emph{non-elite} solutions, 
$\{\tilde{x}_1^{(t)}\}_{i=1}^N$
are split into $K$ groups of equal size 
(i.e. each group has $N/K$ solutions)
and each group is assigned a different $\sigma_k$. 
Without loss of generality, $\sigma_k$ corresponds to 
$\{\tilde{x}^{(t)}_{(k-1)N/K+1}, \dots, \{\tilde{x}^{(t)}_{kN/K}\}$. 
To form the next generation,
each $\tilde{x}^{(t)}_i$ is then mutated according to its corresponding $\sigma_k$,
while the elite is unaltered:
\begin{equation}
x^{(t+1)}_i = 
\begin{cases}
    \tilde{x}^{(t)}_0 \qquad &i=0 \\
    \sim M(\tilde{x}^{(t)}_i; \sigma_{\floor{iK/N}})\qquad &i = 1, \dots, N
\end{cases}
\label{eq:mutate_solutions}
\end{equation}

After the next generation of $\{x^{(t+1)}_i\}_{i=0}^N$ are found, GESMR 
\linebreak
evolves the MRs, 
$\{\sigma_k\}_{k=1}^K$ 
using another separate but similar GA with one elite, truncation selection,
and a different mutation operator.

For each $\sigma_k$, its negative fitness is calculated by considering 
the \emph{best} change in function value it has produced:
\begin{equation}
    \Delta_k^{(t)}\triangleq\Delta(\sigma_k^{(t)}) = \min_{i=(k-1)N/K+1}^{kN/K} \big(f(x^{(t+1)}_i) - f(\tilde{x}^{(t)}_i)\big).
    \label{eq:delta_algo}
\end{equation}

First the MR population is sorted by this $\Delta_k^{(t)}$, producing
$\{ \hat{\sigma}_{k=1}^K \}$.
Truncation selection with one elite is applied to get the next generation parent MRs $\{ \sigma_k \}_{k=1}^K$ with
\begin{equation}
    \tilde{\sigma}_k^{(t)} =
    \begin{cases}
        \hat{\sigma}_1^{(t)} \qquad &k=1\\
        \sim \mathcal{U}\{\hat{\sigma}_1^{(t)}, \dots, \hat{\sigma}_l^{(t)}\} \qquad &k=2,\dots,K
    \end{cases}
    \label{eq:select_mrs}
\end{equation}
and $l = \eta_\sigma K$ (number of MRs for parent selection).
The mutation operator associated with the $\sigma$s is 
$$\sigma' \sim M_\sigma(\sigma;\tau) \triangleq \sigma \tau^{\epsilon} ~~\text{and}~~ \epsilon\sim\mathcal{U}(-1, 1)$$
where $\mathcal{U}(-1, 1)$ represents a continuous uniform distribution on $\mathbb{R}$
and $\tau$ represents a fixed meta mutation rate.

The next generation of MRs is produced by mutating the parent MRs, 
while the elite parent is unaltered:
\begin{equation}
\sigma^{(t+1)}_i = 
\begin{cases}
    \tilde{\sigma}^{(t)}_1 \qquad &i=1 \\
    \sim M_\sigma(\tilde{\sigma}^{(t)}_i; \tau)\qquad &i = 2, \dots, K
\end{cases}
\label{eq:mutate_mrs}
\end{equation}

One full step of GESMR is described in Alg.~\ref{alg:algo}.

The performance of GESMR depends on the number of groups, $K$. 
When $K=1$, GESMR recovers the fixed-MR method. 
When $K=N$, each solution aside from the elite is assigned a different MR,
a method reminiscent of the SAMR method. 
The experiment section shows that in practice
the optimal $K$ lies between $1$ and $N$,
and uncovers a heuristic on how to choose such a $K$.

\section{Experiment}
The experiments in this section are designed to answer the following questions:
\begin{enumerate}
    \item How does GESMR compare to other methods 
    in terms of the quality of function values found and 
    how quickly it converges to those values?
    \item Does SAMR suffer from the Vanishing Mutation Rate Problem (VMRP)?
    Does GESMR solve this problem, and can it produce MRs that are optimal in a long-term sense?
    \item What parts of GESMR are vital to its success?
    \item Why is GESMR more successful than SAMR?
    \item What is the optimal group size in GESMR and how much does this parameter matter?
    \item Does GESMR generalize to the high-dimensional loss landscapes
    of neuroevolution?
    \item Does GESMR generalize to neuroevolution for 
    reinforcement learning control tasks?
\end{enumerate}

\subsection{Comparison Algorithms}

For comparison, the following MR selection and adaptation algorithms
are evaluated in various optimization problems:
\begin{itemize}
    \item \textsuperscript{\textdagger}\textbf{OFMR}:
    Optimal fixed MR found with a grid search;
    \item \textsuperscript{\textdagger}\textbf{LAMR-$G$}:
    MR determined at every $G$ generations by 
    \linebreak
    ``looking ahead," that is, by running a grid search 
    multiple times and picking the MR that produces the best elite in 
    another evolution run (initialized with the current 
    population and
    run for $G$ generations);
    \item \textbf{FMR}: A fixed MR of $\sigma=0.01$;
    \item \textbf{1CMR} A fixed MR of $\sigma = 1/d$ \cite{Ochoa2002-ni};
    \item \textbf{15MR}:
    MR is doubled if the percentage of beneficial mutations 
    is above 1/5 in the current generation 
    and cut in half if not \cite{Rechenberg1978-np};
    \item \textbf{UCB/$R$}: The adaptive MR method proposed by \citet{Fialho2008-ud},
    implemented with a multi-armed bandit with $R$ arms (each corresponding to a different MR), 
    and sampling an arm every generation using the upper confidence bound algorithm \cite{Fialho2008-ud};
    \item \textbf{SAMR}:
    Self-adaptation of MR, where each solution is assigned
its own MR and evolved end-to-end;
    \item \textbf{GESMR}: 
    The method of Algorithm~\ref{alg:algo};
    \item \textbf{GESMR-AVG}: 
    The method of Algorithm~\ref{alg:algo}
    with the min in Eq.~\ref{eq:delta_algo} replaced with the mean;
    \item \textbf{GESMR-FIX}: 
    The method of Algorithm~\ref{alg:algo}
    with the MRs fixed to the initial population and not evolved further.
\end{itemize}
Details for the parameters of these algorithms are provided in Appendix~A.
The \textdagger represents that the algorithm
is an oracle using foresight 
(looking ahead of the current evolution step)
to determine the MR and should not be compared against directly.
Note that \textbf{LAMR-$G$} specifically uses foresight 
to determine the best MR for the next $G$ generations. 
With sufficiently large $G$, its MRs thus serve as an empirical estimate of the 
optimal long-term MRs at any point during evolution.

\subsection{Test Optimization Functions}
All algorithms are evaluated on common test functions:
Ackley, Griewank, Rastrigin, Rosenbrock, Sphere, and Linear 
\cite{Surjanovic2013-fl}. 
Definitions of these test functions are provided in Appendix~B.1.
Each function is evaluated for dimension
$d\in\{2, 10, 100, 1000\}$,
with the initial population sampled from 
$\mathcal{N}(\mathbf{0}, \mathbf{I})$ and 
$\mathcal{N}(\mathbf{0}, 10^2\mathbf{I})$ (referenced in table as std with values 1 and 10).
These functions were chosen
because they are common in the GA literature
and they span a diverse range of ruggedness 
for function landscapes \cite{Malan2009-jz}.
All results are averaged over five seeds.

Fig.~\ref{fig:fits_mrs_test} shows selected runs from this experiment,
displaying the elite function value and the average MR over generations.
The full list of final elite function values are reported in 
Table~1
in Appendix~B.2,
serving as a statistic on how good the final solution is.
The full list of average elite function values over all evolution iterations are reported in 
Table~2
in Appendix~B.2,
serving as a statistic on how quickly the algorithm converges to a good solution.
Mean squared error between the log MR of an algorithm and the log MR of LAMR-100 (averaged over generations) are reported in 
Table~3
in Appendix~B.2,
serving as a statistic on how close to optimal the MRs are. 
Additionally, all of the tables bold the statistically significant results which are computed by a t-test.

\begin{figure}
\centering
\includegraphics[width=\columnwidth]{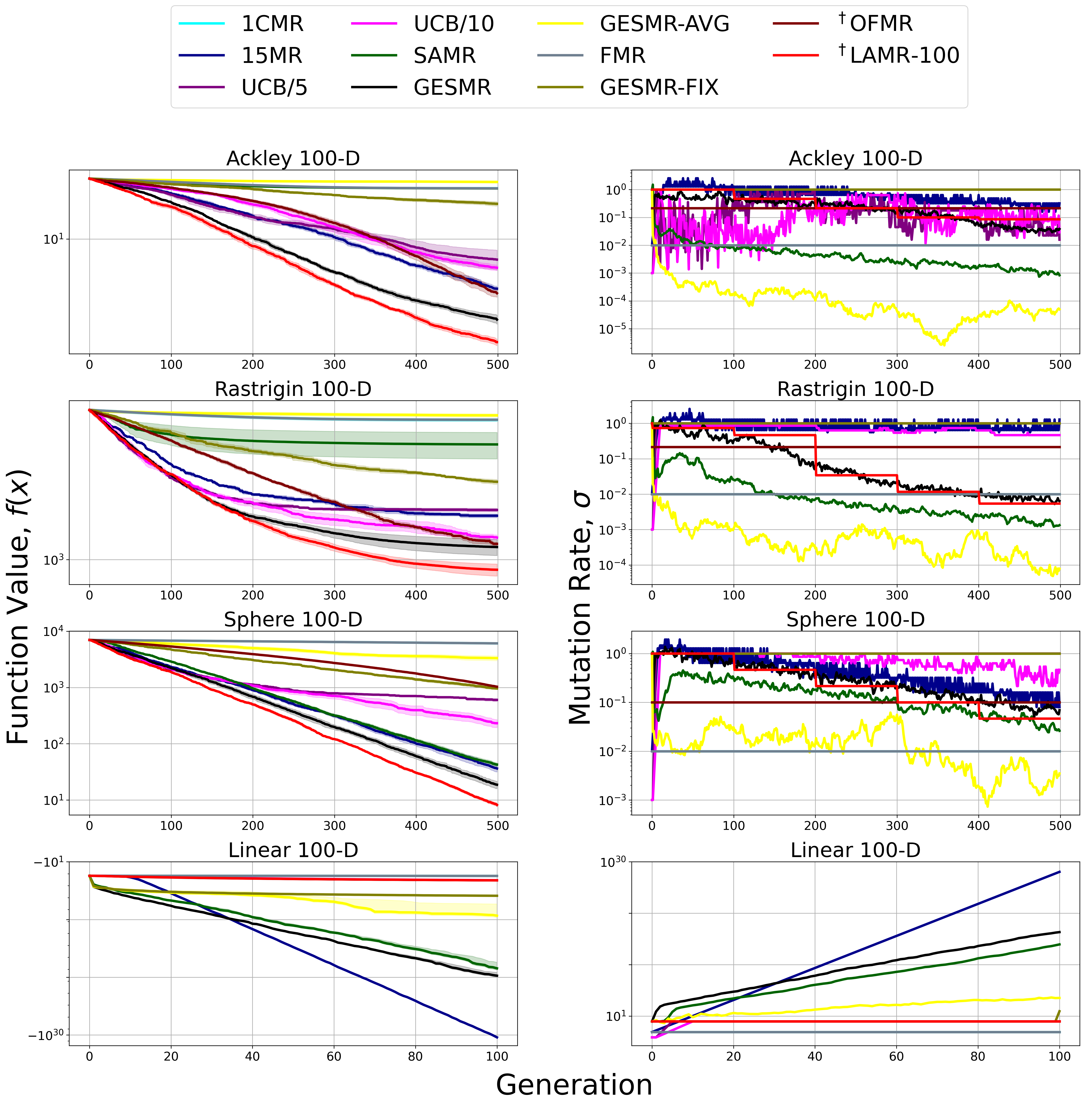}
\caption{
Elite function value and average mutation rate (MR)
over generations of evolution by different adaptive MR methods,
applied to four test optimization problems.
Notice GESMR outperforms other methods in function value
and is able to match its MR to the one from LAMR-$100$.
}
\label{fig:fits_mrs_test}
\end{figure}

To answer Question 1,
GESMR outperforms other methods, excluding the oracles,
in almost all domains both in terms of the final function value
and in terms of quickness of convergence to good values.

To answer Question 2,
SAMR only succeeds and matches the performance of LAMR
when the function landscape is relatively non-rugged, 
like in the Rosenbrock and Sphere functions.
In the rugged functions, SAMR consistently produces MRs
that are sub-optimal and smaller than those produced by even OFMR,
and thus also lags behind in elite function value during evolution.
Thus, SAMR struggles with the VMRP, as shown in previous work
\cite{Rudolph2001-zf, Clune2008-lh, Meyer-Nieberg2007-nz}.
However, GESMR overcomes this phenomenon
and surprisingly consistently \emph{matches its average MR 
to the long-term optimal MR} produced by LAMR-$100$
(i.e.\ red and black lines match in Fig.~\ref{fig:fits_mrs_test},
and GESMR has consistently the lowest error in 
Table~3 in Appendix~B.2).

The limitations of
of all methods except 15MR, SAMR, and 
\linebreak
GESMR
can be seen in the linear test function.
The optimal MR for this case is $\sigma\to\infty$,
but other methods are unable to approximate this result
because they limit themselves to an upper bound
(ex. UCB-$R$ is limited by the largest MR in its arms).
On the other hand, GESMR quickly keeps scaling up the MR until reaching a very large MR.
GESMR is also arbitrarily precise, fine tuning MRs with an evolutionary process.
In contrast, UCB-$R$ and the grid search methods constrain the MRs to a quantized range.

To answer Question 3, GESMR-AVG and GESMR-FIX 
were run as an ablation of GESMR, with the results
shown in Fig.~\ref{fig:fits_mrs_test} and 
Tables~1,~2,~3 in Appendix~B.2.
GESMR outperforms both of them, suggesting that the use of the best mutation statistic and the evolution of MRs are both vital to its success.

\subsection{Empirical Analysis of GESMR vs. SAMR}
\label{sec:empiricalanalysis}
To answer Question 4, 
two objectives for $\sigma$ are defined based on a
change of function value,
and these objectives are shown to be 
related to the GESMR-AVG, GESMR, and SAMR methods.
These objectives are then analyzed empirically (in this section) and theoretically (in Section~\ref{sec:theoreticalanalysis} to explain the
behavior of the algorithms.

Consider the \textbf{change in function value} of a mutation given a solution and an MR:
\begin{equation}
    \Delta(x, \sigma) \sim f(M(x;\sigma)) - f(x).
    \label{eq:delta_analysis}
\end{equation}
For simplicity, this variable will be denoted as $\Delta$.
Let $\akset{\Delta_q}_{q=1}^{N/K}$ represent
independently and identically distributed 
instances of $\Delta$ where $q$ indexes an individual within its group.
To minimize $f(x)$ in evolution, a $\sigma$ must be chosen to
minimize $\Delta(x,\sigma)$ in some capacity (denoted as an ``MR objective").
Consider two MR objectives
\begin{itemize}
    \item 
    \textbf{mean objective},
    $\sigma^*_\mu = \argmin_\sigma \EX_{x,\epsilon}[\Delta(x,\sigma)]$ and
    \item
    \textbf{outlier objective}, $\sigma^*_\text{min} = \argmin_\sigma \EX_{x,\epsilon}[\min_q \Delta_q(x,\sigma)]$.
\end{itemize}
The expectations in the objectives
are over $x$ sampled from the current population 
and the noise in the mutation operator, $\epsilon$.
For simplicity, these objectives are denoted as
$\argmin_\sigma \EX[\Delta]$ and 
\linebreak
$\argmin_\sigma \EX[\min_q\Delta_q]$, respectively.
The mean objective 
\linebreak
corresponds to the algorithm
GESMR-AVG, which selects $\sigma$s directly to minimize a
sample average of $\Delta$.
The outlier objective corresponds to the algorithm
GESMR, which selects $\sigma$s directly to minimize the 
\emph{best} (lowest-value) sample of $\akset{\Delta_q}$.
SAMR does not select $\sigma$s directly,
but rather selects $(x_i, \sigma_i)$ pairs to minimize $f(x_i)$.
However, because $x_i$ is produced using the parent of $\sigma_i$, 
SAMR also selects pairs $(x_i,\sigma_i)$ indirectly  
based on $\sigma_i$s that produce
non-deleterious mutations over generations consistently.
This mechanism is intuitively associated with the mean objective.

\begin{figure}[t]
\centering
\includegraphics[width=\columnwidth]{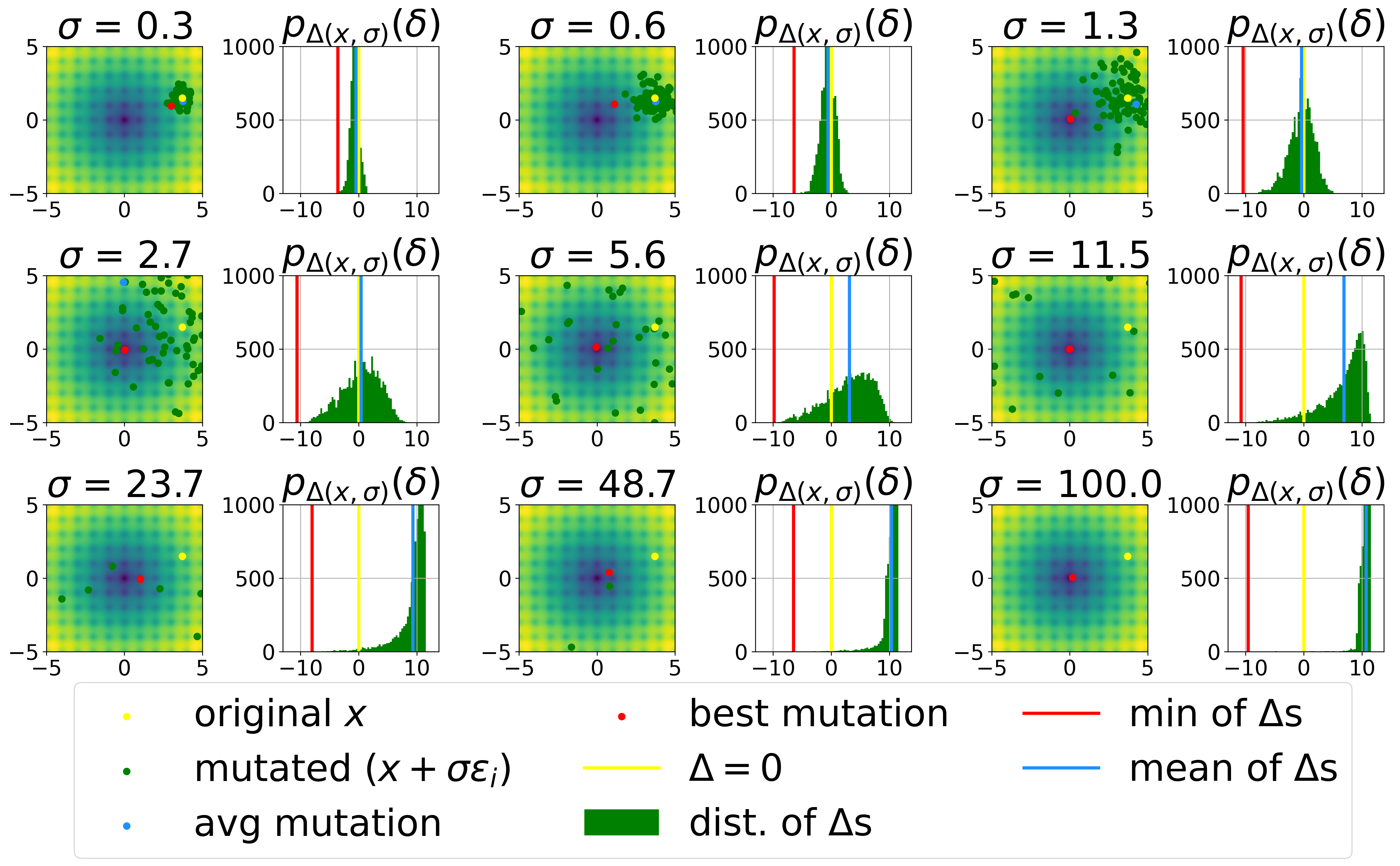}
\caption{
Visualization of mutations and the distribution of the 
change in function value from the mutations, $\Delta(x, \sigma)$ 
(defined in Eq.~\ref{eq:delta_analysis}),
for nine labeled mutation rates, $\sigma$,
at one point, $x$, on the 2-D Ackley function.
The left plots show an image representation of the 2-D function
landscape where lighter colors are higher values
and annotates the original solution and some mutated solutions.
The right plots show the empirical histogram of $\Delta(x, \sigma)$
and annotates the mean and minimum samples of this histogram.
Only moderate $\sigma$s are able to mutate to the global minimum.
}
\label{fig:landscape_mut}
\end{figure}

\begin{figure}[t]
\centering
\includegraphics[width=\columnwidth]{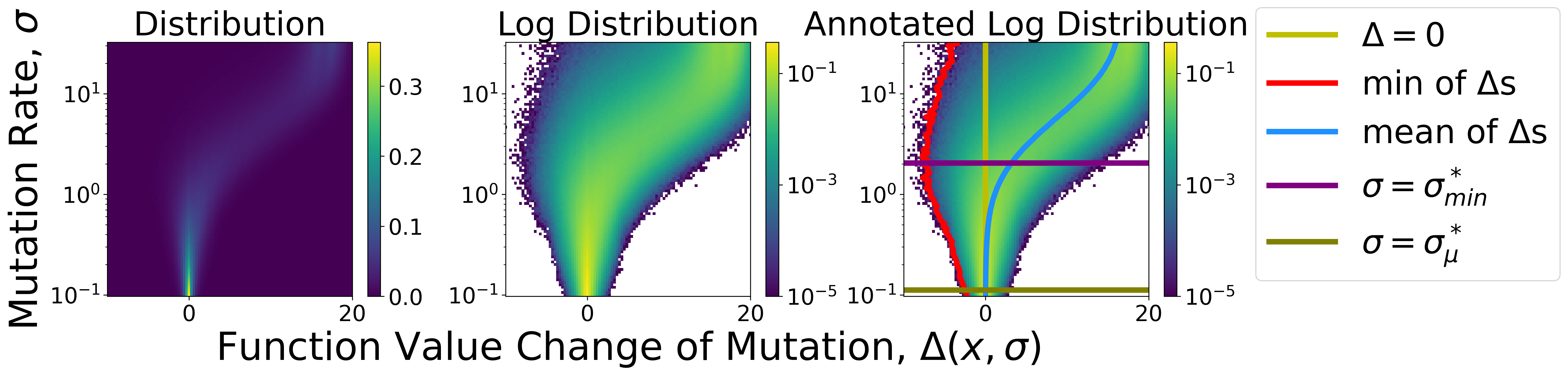}
\caption{
A representation of $\sigma$ versus 
$\Delta(x,\sigma)$ (defined in Eq.~\ref{eq:delta_analysis})
colored by the empirical probability density function, 
$p_{\Delta} (\delta; \sigma)$
and the respective log distribution for the 2-D Ackley function.
Many samples of $\Delta(x,\sigma)$ 
are generated from $x\sim\mathcal{N}(0, I)$,
and a logarithmic range of $\sigma$s, 
and put into bins of a $\sigma$-$\Delta$ grid,
colored by the number of samples the bin has.
Annotated are the $\sigma$ versus 
$\EX[\Delta; \sigma]$ (mean of $\Delta$s) and
$\EX[\min_q\Delta_q; \sigma]$ (min of $\Delta$s) curves,
and the $\sigma$s that minimize them.
Importantly, notice that $\sigma^*_\mu=0$ and $\sigma^*_\text{min}>0$.
}
\label{fig:curve_template}
\end{figure}

To analyze general 
function landscapes outside of evolution,
$x$ is either fixed to a point or sampled from a distribution, 
and many more samples for $\akset{\Delta_q}$ are used.
Fig.~\ref{fig:landscape_mut} shows
a histogram of samples from $\Delta$ and 
visualizes their respective mutations 
across values of $\sigma$ for a single $x$ in the Ackley 2-D function,
highlighting that the best mutation comes from a $\sigma$
that is not too small and not too large.
Fig.~\ref{fig:curve_template} represents this same information,
but sampling $x\sim \mathcal{N}(0,I)$,
for a continuous range of $\sigma$ as a 
visualization of the probability density function (PDF),
$p_{\Delta}(\delta; \sigma)$.
The sigma versus the mean objective and the 
outlier objective curves as well as
their optimal $\sigma$ solutions, 
$\sigma^*_\mu$ and $\sigma^*_\text{min}$
are shown over the PDF.
Fig.~\ref{fig:curves} displays this same plot
for several other test optimization problems.

\begin{figure}[t]
\centering
\includegraphics[width=\columnwidth]{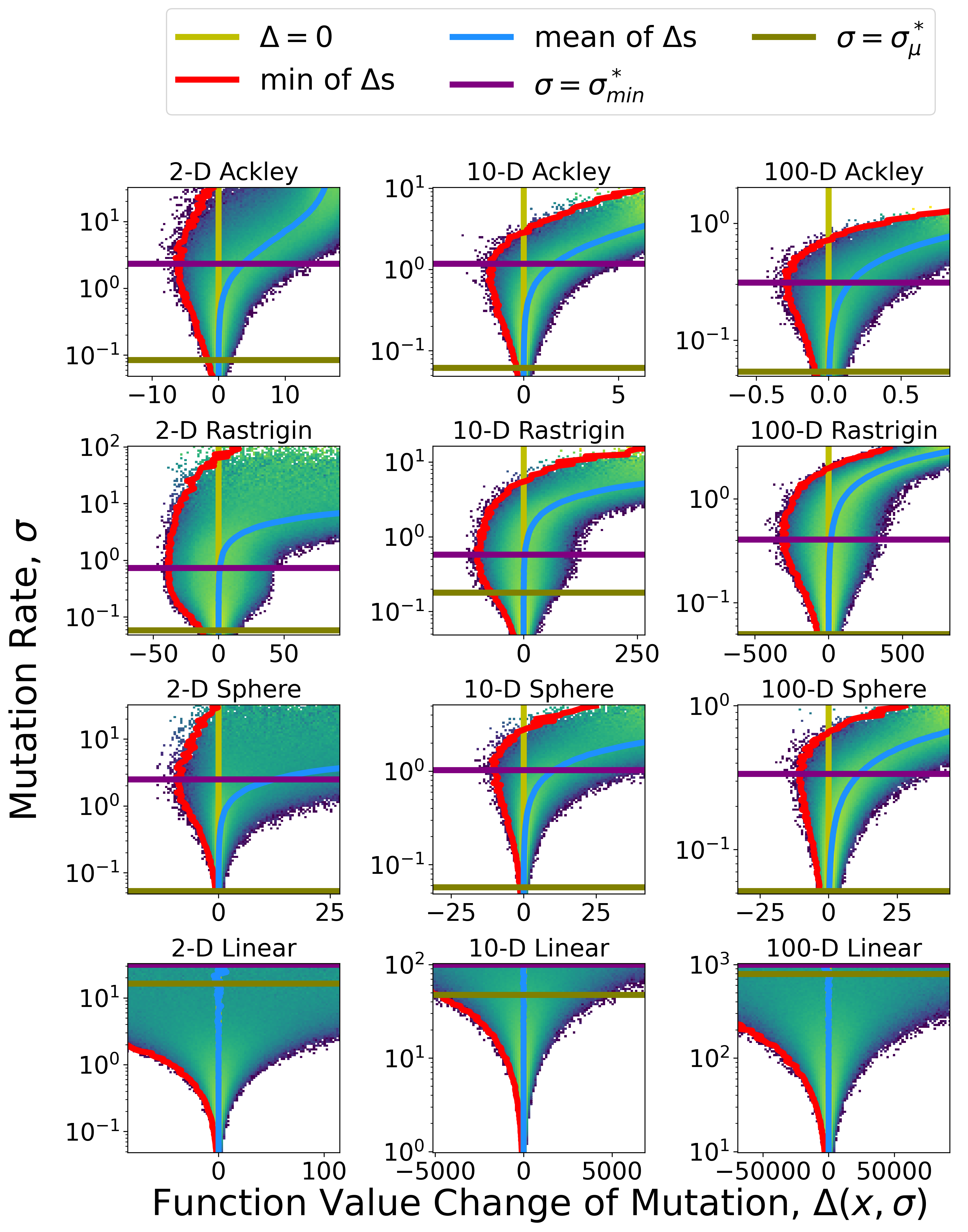}
\caption{
A representation of the $\sigma$ versus 
$\Delta(x,\sigma)$(defined in Eq.~\ref{eq:delta_analysis})
colored by the empirical probability density function
$p_{\Delta} (\delta; \sigma)$, 
and the respective log distribution
for several different test optimization functions of different dimensionality.
Many samples of $\Delta(x,\sigma)$
are generated from $x\sim\mathcal{N}(0, I)$
and a logarithmic range of $\sigma$s
and put into bins of a $\sigma$ versus $\Delta$ 2-D grid,
colored by the number of samples the bin has.
Annotated are the $\sigma$ versus 
$\EX[\Delta; \sigma]$ (mean of $\Delta$s), 
$\EX[\min_q\Delta_q; \sigma]$ (min of $\Delta$s),
and $\EX[\max_q\Delta_q; \sigma]$ (max of $\Delta$s) curves,
and the optimal $\sigma$ that minimizes the first two curves.
All curves show that $\sigma^*_\mu\to 0$ and $\sigma^*_\text{min}>0$.
}
\label{fig:curves}
\end{figure}

As Fig.~\ref{fig:curves} shows
$\EX[\Delta]$ often increases monotonically with $\sigma$. As a result, the optimal MR tends to go to zero, i.e. $\sigma^*_\mu \to 0$.
Interestingly, $\EX[\min_q\Delta_q]$ is zero for $\sigma=0$, and \emph{decreases} monotonically as $\sigma$ increases 
until $\sigma=\sigma^*_\text{min}$,
and then increases monotonically with $\sigma$, leading to
$\sigma^*_\text{min} \not\to 0$.
These behaviors hold true for all landscapes tested, except
for the non-rugged linear landscape.

These results answer Question~4 by showing empirically that
GESMR targets higher MRs than SAMR in many problems,
demonstrating that it has the capacity to mitigate the VMRP.
Theoretical analysis of GESMR and SAMR further
grounds this empirical finding to prove that
GESMR will always avoid a fully vanishing MR.


\subsection{Theoretical Analysis of GESMR vs. SAMR}
\label{sec:theoreticalanalysis}
In this section, the behavior of the mean and outlier MR objectives are analyzed as $\sigma\to0$ and $\sigma\to \infty$ \textit{during} evolution.
The current population of $x$ is assumed to be already \textit{partially} optimized, i.e., 
better than those of random search (which is the initialization). 
Partial optimization also means that the evolution has not yet converged, and thus 
the gradient of the function at the solutions, $\nabla f(x)$, is nonzero.

Assume $\sigma\to\infty$, fully and randomly exploring the solution space without exploiting the current solutions.
Then, \linebreak 
$\EX[\Delta]=\EX_{x'}[f(x')]-\EX_x[f(x)]$
(first expectation is over all mutants, $x'$)
becomes a constant only based on the function landscape and the distribution of $x$.
$\EX[\min_q\Delta_q]$ becomes, by definition, random search of the function landscape.
Since $x$ is already partially optimized,
random search must yield a strictly worse expected solution than $x$.
So, $\EX[\Delta] > \EX[\min_q\Delta_q] > 0$, and thus 
\emph{both MR objectives are positive}.

Assume $\sigma=0$ (i.e. no mutation), fully exploiting the current solution without exploring the solution space.
Then, \emph{both MR objectives vanish} as $\EX[\Delta] = \EX[\min_q\Delta_q] = 0$.

The most interesting case is when $\sigma$ is small but not zero, i.e.\ $0<\sigma<\sigma_c$.
For a sufficiently small $\sigma_c$ the function landscape can be approximated as linear with $f(M(x;\sigma)) \approx f(x)+\sigma\epsilon^T\nabla f(x)$.
Then, $\Delta(x,\sigma) = f(M(x;\sigma))-f(x)= \sigma\epsilon^T\nabla f(x)$.
Since $\epsilon \sim \mathcal{N}(0, I)$, it follows that
$\Delta(x,\sigma) \sim \mathcal{N}(0, \sigma^2\Vert\nabla f(x)\Vert^2)$,
which leads to
$\EX[\Delta]=0$. 
A further useful constraint is provided by Theorem~\ref{the:mingaus}:
\begin{theorem}
\label{the:mingaus}
Let $Z_\sigma^{(1)},\dots, Z_\sigma^{(q)} \sim \text{iid } \mathcal{N}(0,\sigma^2)$.\\
\hspace*{3ex}\text{If~} $Y_\sigma = \min(Z_\sigma^{(1)},\dots, Z_\sigma^{(q)})$,
\text{then} $\EX[Y_\sigma] = \sigma \EX[Y_{\sigma=1}]$ \text{with}\\
\hspace*{3ex}$\EX[Y_{\sigma=1}]<0$.
\end{theorem}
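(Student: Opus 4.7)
The plan is to split the proof into two independent claims: first, that the expected minimum scales linearly with $\sigma$, and second, that the expected minimum at $\sigma=1$ is strictly negative (assuming, as is implicit in the surrounding context, that the group size satisfies $q\geq 2$).

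For the scaling claim, I would exploit the homogeneity of the normal distribution. If $Z^{(j)}_1 \sim \mathcal{N}(0,1)$ are iid standard normals, then $\sigma Z^{(j)}_1 \sim \mathcal{N}(0,\sigma^2)$, so jointly $(Z_\sigma^{(1)},\dots,Z_\sigma^{(q)})$ has the same distribution as $(\sigma Z^{(1)}_1,\dots,\sigma Z^{(q)}_1)$. Since $\sigma \geq 0$, the $\min$ operator commutes with multiplication by $\sigma$: $\min_j(\sigma Z^{(j)}_1)=\sigma\min_j Z^{(j)}_1=\sigma Y_{\sigma=1}$. Taking expectations and using linearity then yields $\EX[Y_\sigma]=\sigma\EX[Y_{\sigma=1}]$.

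For the sign claim, my plan is a short symmetry-plus-strict-inequality argument. Pick any single coordinate, say $Z^{(1)}_1$, which has $\EX[Z^{(1)}_1]=0$. By definition, $Y_{\sigma=1}=\min_j Z^{(j)}_1 \leq Z^{(1)}_1$ pointwise. Moreover, since the $Z^{(j)}_1$ are iid continuous, the event $\{Z^{(1)}_1$ is not the unique minimum$\}$ has probability $(q-1)/q > 0$ for $q\geq 2$, and on this event $Y_{\sigma=1} < Z^{(1)}_1$ with positive probability. Taking expectations of the strict (in $L^1$ sense) inequality $Y_{\sigma=1}\leq Z^{(1)}_1$ gives $\EX[Y_{\sigma=1}] < \EX[Z^{(1)}_1] = 0$. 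Alternatively, one could invoke symmetry: $-Y_{\sigma=1}$ has the same distribution as $\max_j Z^{(j)}_1$, and the expected maximum of $q\geq 2$ iid standard normals is a well-known strictly positive quantity (e.g.\ equal to $1/\sqrt{\pi}$ for $q=2$).

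The main obstacle is really just bookkeeping around the degenerate case $q=1$, where $Y_{\sigma=1}=Z^{(1)}_1$ has mean zero and the strict inequality fails; this is handled by noting that the theorem is applied in the paper with group sizes $N/K \geq 2$, so the standing assumption $q\geq 2$ is natural. Once that is fixed, both steps are elementary, so no subtle technical difficulty remains.
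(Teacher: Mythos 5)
Your proposal is correct, but it takes a different route from the paper. The paper proves the scaling identity analytically: it derives the distribution of the minimum via $P(Y_\sigma \le y) = 1-(1-\Phi(y/\sigma))^q$, differentiates to get the density $f_{Y_\sigma}(y)=\frac{1}{\sigma}q(1-\Phi(y/\sigma))^{q-1}\phi(y/\sigma)$, and then performs the substitution $y \mapsto \sigma y$ inside the expectation integral to pull out the factor of $\sigma$; the sign claim $\EX[Y_{\sigma=1}]<0$ is then simply asserted as obvious for the minimum of $q>1$ standard normals. You instead argue at the level of random variables: the joint distributional equality $(Z_\sigma^{(1)},\dots,Z_\sigma^{(q)}) \stackrel{d}{=} (\sigma Z_1^{(1)},\dots,\sigma Z_1^{(q)})$ combined with the fact that $\min$ commutes with multiplication by $\sigma\ge 0$ gives $Y_\sigma \stackrel{d}{=} \sigma Y_{\sigma=1}$ directly, with no densities or integrals needed. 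Your approach is shorter and more elementary, and it also supplies the detail the paper leaves implicit: a genuine proof that $\EX[Y_{\sigma=1}]<0$ (via $Y_{\sigma=1}\le Z_1^{(1)}$ pointwise with strict inequality on a positive-probability event, requiring $q\ge 2$, which matches the paper's use with group size $N/K\ge 2$). What the paper's calculation buys in exchange is the explicit closed-form density of $Y_\sigma$, though that density is not used elsewhere, so nothing is lost by your route.
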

\begin{proof}
By definition, $f_z(z) = \phi(z/\sigma)$ and $F_z(z) = \Phi(z/\sigma)$.
Then,\\[-3.5ex]
\begin{align*}
    P(Y_\sigma\leq y) = 1-P(Y_\sigma\geq y) &= 1-P(Z_\sigma^{(1)}\geq y, \dots, Z_\sigma^{(q)}\geq y)\\
    &=1-(1-\Phi(y/\sigma))^q\\
    f_{Y_\sigma}(y)&=\frac{1}{\sigma}q(1-\Phi(y/\sigma))^{q-1}\phi(y/\sigma)\\
    \EX[Y_\sigma]&=\int_y \frac{y}{\sigma}q(1-\Phi(y/\sigma))^{q-1}\phi(y/\sigma)\\
    &=\sigma \int_y y q(1-\Phi(y))^{q-1}\phi(y)\\
    &=\sigma \EX[Y_{\sigma=1}].
\end{align*}
{\indent In addition, $\EX[Y_{\sigma=1}]<0$ because $Y_{\sigma=1}$ is the minimum of $q>1$}
{\indent zero-mean standard normal random variables.}
\qedhere
\end{proof}
\noindent
By Theorem~\ref{the:mingaus},
$\EX[\min_q\Delta_q]\propto \sigma \Vert\nabla f(x)\Vert <0$. Thus,
in this range of $\sigma$, \emph{the outlier objective \textit{decreases} linearly as $\sigma$ increases, while the mean objective still vanishes}.

Using these three cases, consider the MR objectives
as $\sigma$ varies from $0$ to $\infty$.
$\EX[\Delta]$ starts at $0$ and takes a
theoretically unknown (but empirically monotonic) path
to a positive value.
$\EX[\min_q\Delta_q]$ starts at $0$, 
\textit{decreases} to below $0$ until a certain $\sigma_c$, 
then takes a theoretically unknown (but empirically monotonic) path
to a positive value.
This theoretical analysis guarantees that $\sigma^*_\text{min}>0$,
a condition that cannot be put on $\sigma^*_\mu$.

Thus, this section and Section~\ref{sec:empiricalanalysis} empirically and theoretically answer Question~4, i.e.\ explain \emph{why}
GESMR-AVG and SAMR often suffer from the VMRP in rugged landscapes,
and \emph{how} GESMR overcomes this limitation.
In short, GESMR-AVG and SAMR assume that $\sigma$ produce non-deleterious mutations consistently,
whereas most mutations are actually deleterious \cite{Clune2008-lh}.
This condition is possible only if $\sigma\to 0$,
which GESMR incorporates
into the algorithm itself.




\subsection{Ablation on the Group Size Parameter}
To answer Question 5, and to evaluate the optimal number of
\linebreak
groups, $K$,
evolution was run on the Ackley, Griewank, Rosenbrock, and Sphere functions 
with $d=100$ and $K$ equal to all
factors of $N$ for various values of $N$.
It turns out that 
if the number of groups is
too small, i.e. $K\to 1$, or too big, i.e. $K\to N$, 
the performance drops very fast (Fig.~\ref{fig:group_size}).
In general, $K=\sqrt{N}$ is a reasonable value,
but as $N\to\infty$, 
the optimal $K\to N^{3/4}$.
This finding suggests that the number-of-groups
hyperparameter can be set according to $N$ and does not need tuning.

\begin{figure}
\centering
\includegraphics[width=\columnwidth]{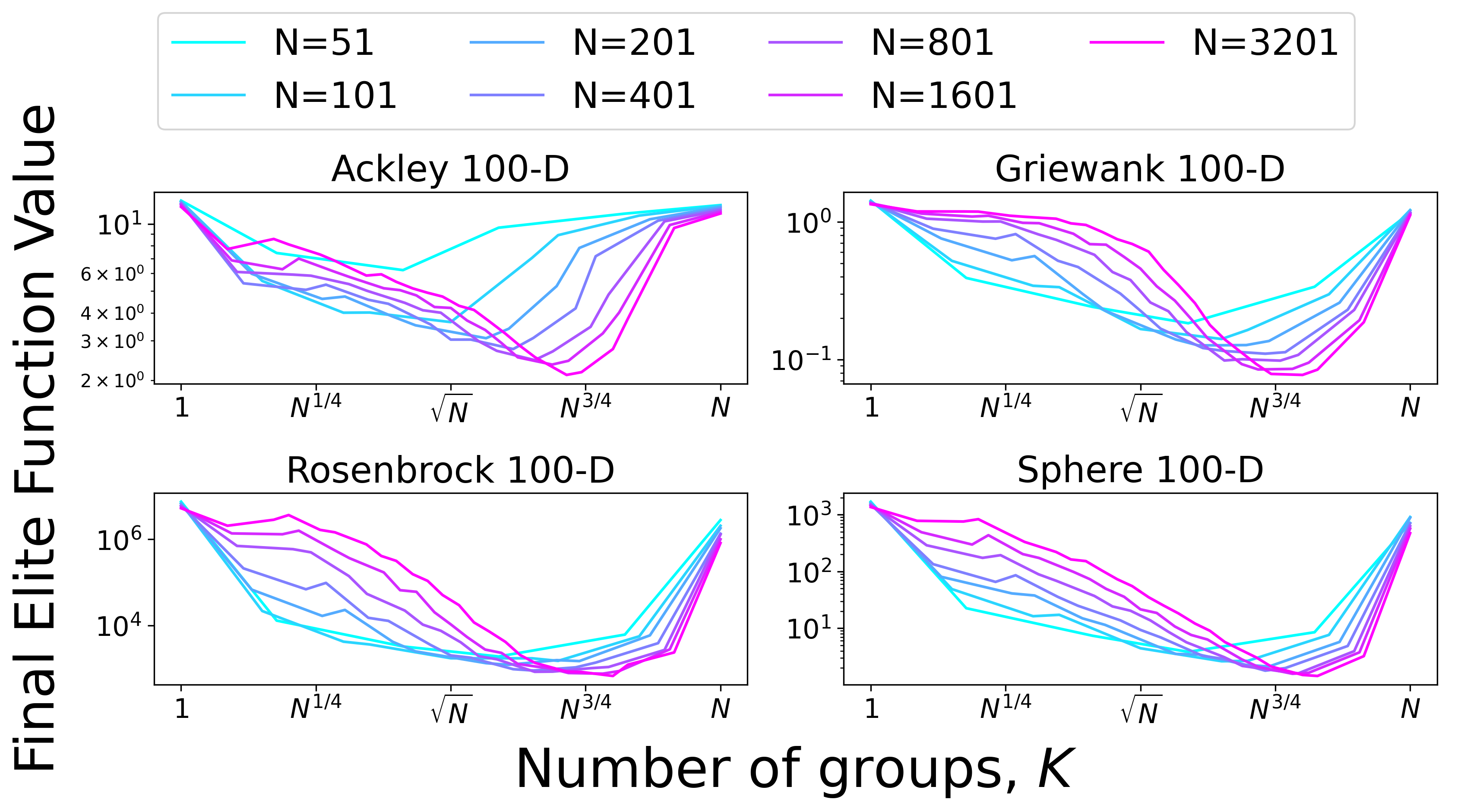}
\caption{
Elite final function value of 
GESMR versus the number of groups, $K$,
as the population size, $N$ increases
in the Ackley 100-D function.
As $N\to\infty$, the optimal $K\to N^{3/4}$, 
suggesting $K$ does not need tuning.
}
\label{fig:group_size}
\end{figure}

\subsection{Neuroevolution for Image Classification}

\begin{figure}
\centering
\includegraphics[width=\columnwidth]{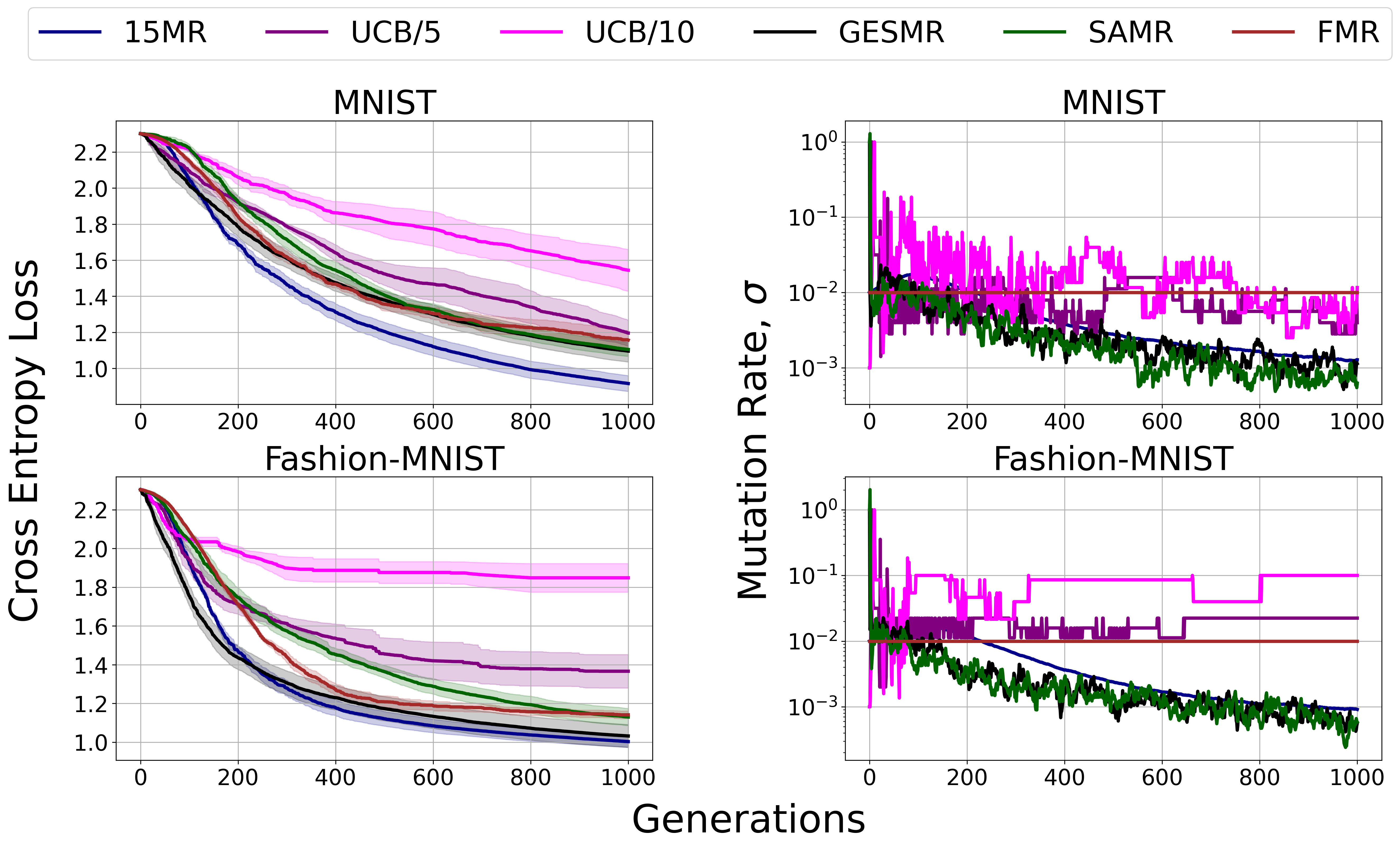}
\caption{
Elite function value and average mutation rate
(for different mutation rate control strategies)
versus generations of neuroevolution
applied to image classification in MNIST and Fashion-MNIST.
GESMR outperforms most methods except 15MR, which appears to be an especially good fit for this problem.
}
\label{fig:fits_mrs_mnist}
\end{figure}

To answer Question 6, the algorithms were 
run on the high dimensional loss landscapes of 
neuroevolution for image classification with the 
common MNIST and Fashion-MNIST datasets 
\cite{LeCun1998-wt, Xiao2017-kj}.
The details of the datasets, the NN architecture evolved, and the experimental setup
are provided in Appendix~C.
Each algorithm was run independently five times and the mean loss and the standard error measured.

GESMR outperforms all other methods, including FMR and
\linebreak
SAMR,
but does not beat 15MR (Fig.~\ref{fig:fits_mrs_mnist}).
Presumably, 15MR's hyperparameter of 1/5 is especially
suited to the MNIST loss landscapes but might have trouble generalizing to other problems,
like the test optimization problems and the reinforcement learning control problems.

\subsection{Neuroevolution for Reinforcement Learning}
Reinforcement learning (RL) tasks are amenable to the neuroevolution approach
because the approach tolerates long time-horizon rewards well \cite{Salimans2017-ji, Such2017-rn}.
To answer Question 7, the algorithms were evaluated 
on four common RL control tasks:
CartPole, Pendulum, Acrobot, and MountainCar
\cite{Brockman2016-bx}. In all these tasks, a controller maps the robot's input observations
to either continuous or discrete actions to maximize a cumulative reward.
The details of these environments, the neural architecture evolved, and the experimental setup
are provided in the Appendix~D.
Each algorithm was run independently five times and the mean and standard error of performance was measured.

The results are shown in
Fig.~\ref{fig:fits_mrs_rl}
in the Appendix~D.
GESMR generally outperformed other methods including 
the baseline fixed MR and SAMR.
Presumably, GESMR fails in MountainCar
because the reward signal is very sparse
(zero rewards provide no way to appropriately select for MRs).

\subsection{Comparison against CMA-ES}
CMA-ES is not a pure adaptive MR GA method: It stores a covariance matrix
to control the spread of the population, rather than storing a single MR \cite{Hansen2016-qa}.
This matrix grows quadratically with the solution vector length.
However, CMA-ES still provides an interesting comparison 
given a fixed computational budget.
Fig.~\ref{fig:gesmr_vs_cmaes} shows that GESMR outperforms
CMA-ES significantly in four of the most challenging test optimization problems,
even 
\linebreak
though 
CMA-ES uses much more memory (quadratic in the solution space).
Thus, not only does GESMR scale to higher dimensional problems,
it also outperforms CMA-ES when both are given the same running time.

\begin{figure}
\centering
\includegraphics[width=\columnwidth]{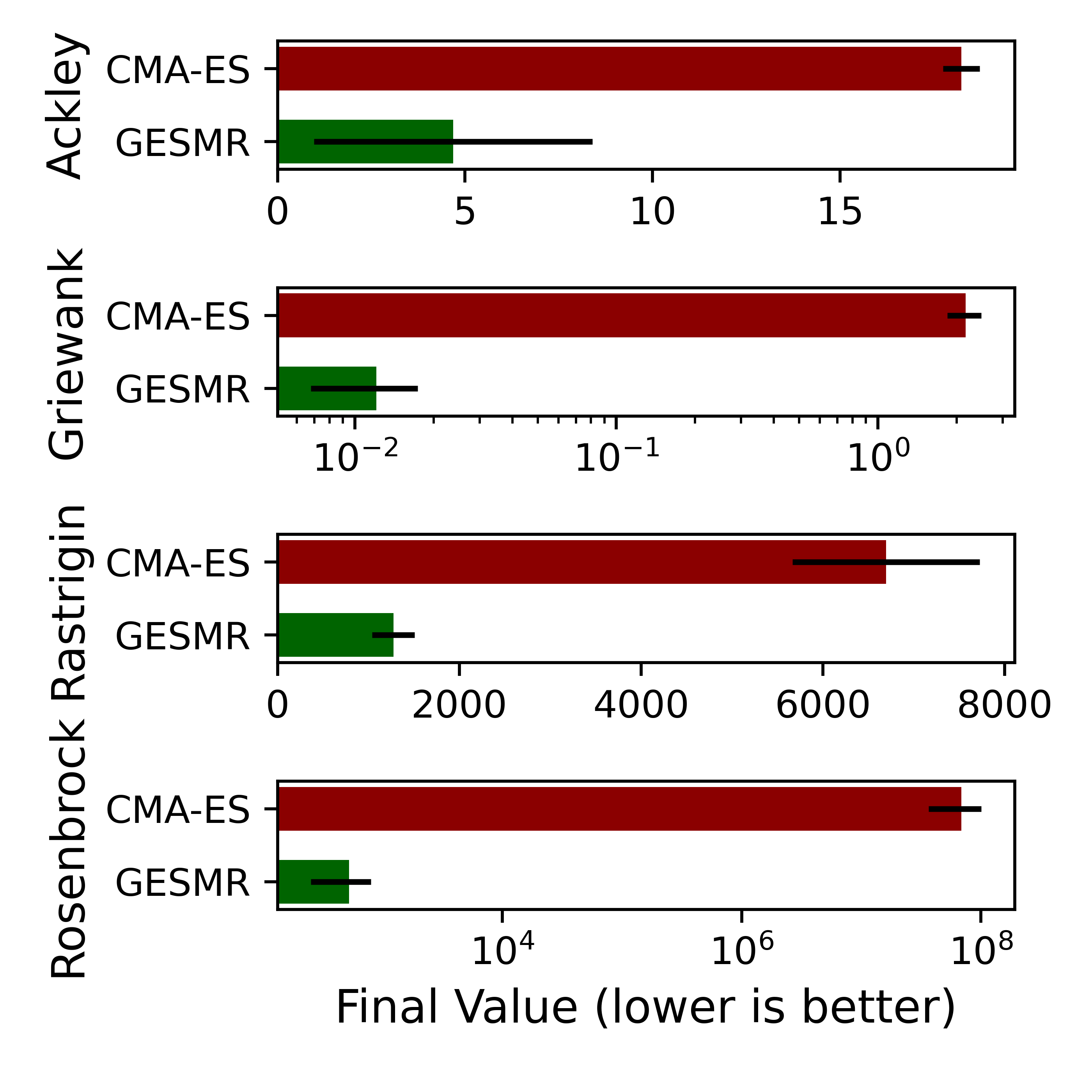}
\caption{
Elite final function values of GESMR and CMA-ES on
four challenging problems in the 100-D solution space with a fixed computational budget.
Whereas CMA-ES is only able to complete 50 generations, GESMR is completes approximately 1000, resulting in an order of magnitude better values.
}
\label{fig:gesmr_vs_cmaes}
\end{figure}

\section{Conclusion}

In this paper, a novel and simple adaptive mutation rate (MR)
\linebreak
method,
group elite selection mutation rate (GESMR),
was proposed to mitigate the vanishing mutation rate problem (VMRP), 
along with empirical analysis that grounds its success over
self-adaptation of mutation rates (SAMR).
Comprehensive experiment results showed that GESMR
outperforms previous adaptive MR methods
in final value and convergence speed.
GESMR also consistently matches its MRs to the 
empirically estimated long-term optimal MR.
Thus, this work provides the next step in designing self-adaptive machine learning algorithms.

\bibliographystyle{ACM-Reference-Format}
\bibliography{sample-base}

\clearpage
\appendix

\section{General Experiment Setup}
All algorithms for all experiments 
(except the group size ablation experiment)
are run with a population size of $N+1=101$.
The test optimization problems are run for 
$T\in\akset{100, 300, 1000, 2500}$ generations with
problem dimensionality $d\in\akset{2, 30, 100, 1000}$, respectively.
The Linear function is always only run for $T=100$ generations.
The MNIST/Fashion-MNIST experiments are run for $T=1000$
generations.
All reinforcement learning experiments are run for $T=100$
generations.

\textbf{OFMR}
finds the optimal fixed MR
using a grid search over a logarithmic range
of ten MRs ranging from \num{1e-3} to \num{1}.
For each MR in the grid search, an entire evolution is run to evaluate it.
The MR whose evolution provides the best final elite function value is picked as
the optimal fixed MR, and another fixed MR
evolution is run with this MR value.

\textbf{LAMR-$G$}
changes the MR every $G$ generations,
and picks the MR according to a grid search
over a logarithmic range of 10 MRs ranging from 
\num{1e-3} to \num{1}.
For each MR in the grid search, 
the current population is used to initialize another evolution run
that ooks ahead for $G$ generations.
The MR whose evolution provides the best final elite is
used for the next $G$ generations in the main evolution run.
In this way, LAMR-$G$ is able to adapt MRs for the long-term
by directly looking ahead $G$ generations and picking an that MR performs the best.

\textbf{FMR} sets the MR to a fixed \num{1e-2}, 
as is commonly done when the user is left to define an MR. 

\textbf{1CMR} sets the MR to a fixed $1/d$
where $d$ is the dimensionality of the solution space
\cite{Ochoa2002-ni}.
The goal is to search carefully in problems with high dimensionality
and explore more in problems with low dimensionality.

\textbf{15MR} 
starts with the MR equal to \num{1e-2} and 
adapts MRs based on the percentage of beneficial mutations
in the current generation (i.e.\ those that result in a negative function value change)
If the percentage is greater than 1/5, the MR is doubled,
else it is cut in half.
This factor of two is chosen to match the meta-MRs in SAMR and GESMR
in order to compare adaptability fairly between methods.

\textbf{UCB/$R$}
creates a multi-armed bandit problem with $R$ arms
corresponding to MRs that are spaced logarithmically between
\linebreak
\num{1e-3} and 1.
The upper confidence bound (UCB) algorithm is utilized to solve the problem.
At each generation, an MR is sampled from UCB;
the reward that is reported back is the \textit{best} (lowest)
change in function value from mutations
for the current generation.

With \textbf{SAMR},
solutions are paired up with MRs spaced logarithmically between \num{1e-3} and \num{1e3}.
The solutions are mutated according to their assigned MR
and the MRs are mutated with the same
equation as with GESMR, using the meta-MR $\tau=2$.

With \textbf{GESMR},
the population of MRs are initialized
by spacing them logarithmically between \num{1e-3} and \num{1e3}.
They are mutated using the meta-MR $\tau=2$.

\begin{table*}[t]
\centering

\scalebox{0.7}{

\begin{tabular}{c | r r | r r r r r r r r r r r}
 &
 \multicolumn{1}{c}{Dim}&
 \multicolumn{1}{c}{Std}&
 \multicolumn{1}{c}{\textbf{\textsuperscript{\textdagger}OFMR}}&
 \multicolumn{1}{c}{\textbf{\textsuperscript{\textdagger}LAMR-100}}&
 \multicolumn{1}{c}{\textbf{FMR}}&
 \multicolumn{1}{c}{\textbf{1CMR}}&
 \multicolumn{1}{c}{\textbf{15MR}}&
 \multicolumn{1}{c}{\textbf{UCB/5}}&
 \multicolumn{1}{c}{\textbf{UCB/10}}&
 \multicolumn{1}{c}{\textbf{SAMR}}&
 \multicolumn{1}{c}{\textbf{GESMR}}&
 \multicolumn{1}{c}{\textbf{GESMR-AVG}}&
 \multicolumn{1}{c}{\textbf{GESMR-FIX}}
 \\
\toprule

\multirow{8}{*}{\rotatebox[origin=c]{90}{Ackley}}
 &\multirow{2}{*}{2}
 & 1
 & 0.0 & 0.0 & 0.0 & 0.0 & 0.0 & 0.0 & 0.0 & \textbf{0.0} & 0.0 & 0.0 & 0.0\\
&& 10
 & 0.5 & 0.5 & 1.5 & 0.0 & 0.1 & 0.1 & 0.0 & \textbf{0.0} & 0.0 & 0.0 & 0.0\\
 &\multirow{2}{*}{30}
 & 1
 & 1.2 & 0.0 & 2.8 & 2.7 & \textbf{0.2} & 2.2 & 2.7 & 2.4 & 0.8 & 3.2 & 3.0\\
&& 10
 & 2.7 & 1.4 & 15.2 & 15.3 & 3.1 & 4.9 & 3.5 & 11.0 & *\textbf{1.0} & 15.6 & 6.6\\
 &\multirow{2}{*}{100}
 & 1
 & 3.0 & 2.5 & 3.1 & 3.1 & \textbf{2.4} & 3.1 & 2.9 & 2.9 & 2.7 & 3.9 & 3.5\\
&& 10
 & 4.1 & 2.3 & 16.3 & 16.3 & *\textbf{2.8} & 6.2 & 4.0 & 16.3 & 3.6 & 17.2 & 12.0\\
 &\multirow{2}{*}{1000}
 & 1
 & 3.6 & 3.5 & 3.7 & 4.6 & 3.5 & 3.9 & 4.0 & 3.6 & \textbf{3.5} & 4.9 & 4.6\\
&& 10
 & 12.0 & 15.9 & 17.4 & 18.2 & 17.6 & 17.8 & 17.3 & 17.3 & \textbf{17.1} & 18.3 & 18.1\\
\hline
\multirow{8}{*}{\rotatebox[origin=c]{90}{Griewank}}
 &\multirow{2}{*}{2}
 & 1
 & 0.0 & 0.0 & 0.0 & 0.0 & 0.0 & 0.0 & 0.0 & \textbf{0.0} & 0.0 & 0.0 & 0.0\\
&& 10
 & 0.0 & 0.0 & 0.0 & 0.0 & 0.0 & 0.0 & 0.0 & 0.0 & 0.0 & 0.0 & \textbf{0.0}\\
 &\multirow{2}{*}{30}
 & 1
 & 0.0 & 0.0 & 0.1 & 0.0 & 0.0 & 0.0 & 0.0 & 0.0 & *\textbf{0.0} & 0.2 & 0.1\\
&& 10
 & 0.1 & 0.0 & 1.3 & 1.2 & 0.2 & 0.2 & 0.2 & 0.0 & \textbf{0.0} & 1.1 & 0.7\\
 &\multirow{2}{*}{100}
 & 1
 & 0.0 & 0.0 & 0.0 & 0.0 & 0.0 & 0.1 & 0.1 & 0.0 & *\textbf{0.0} & 0.3 & 0.2\\
&& 10
 & 0.2 & 0.0 & 2.3 & 2.3 & 0.0 & 0.3 & 0.3 & 0.0 & *\textbf{0.0} & 1.5 & 1.0\\
 &\multirow{2}{*}{1000}
 & 1
 & 0.2 & 0.2 & 0.2 & 1.0 & 0.2 & 0.6 & 0.6 & 0.3 & *\textbf{0.2} & 0.9 & 0.8\\
&& 10
 & 2.4 & 1.8 & 19.3 & 22.8 & 2.4 & 6.5 & 5.2 & 2.6 & *\textbf{2.2} & 20.3 & 10.5\\
\hline
\multirow{8}{*}{\rotatebox[origin=c]{90}{Rastrigin}}
 &\multirow{2}{*}{2}
 & 1
 & 0.0 & 0.0 & 0.0 & 0.0 & 0.0 & 0.0 & 0.0 & \textbf{0.0} & 0.0 & 0.0 & 0.0\\
&& 10
 & 0.0 & 0.2 & 6.2 & \textbf{0.1} & 1.0 & 0.2 & 0.1 & 1.3 & 0.2 & 2.8 & 0.2\\
 &\multirow{2}{*}{30}
 & 1
 & 27.8 & 26.3 & 27.6 & 34.7 & 27.3 & 173.2 & 166.3 & \textbf{26.9} & 28.3 & 90.4 & 62.2\\
&& 10
 & 210.4 & 113.5 & 1544.5 & 1539.1 & 320.9 & 302.1 & 306.3 & 1108.8 & *\textbf{150.0} & 1517.0 & 368.3\\
 &\multirow{2}{*}{100}
 & 1
 & 109.1 & 104.0 & 113.7 & 113.3 & 118.3 & 782.9 & 760.4 & \textbf{94.0} & 111.1 & 348.6 & 263.8\\
&& 10
 & 984.0 & 839.7 & 6949.1 & 6934.4 & 1612.9 & 1793.1 & 1229.1 & 4918.4 & \textbf{1149.7} & 7240.4 & 2590.7\\
 &\multirow{2}{*}{1000}
 & 1
 & 2001.4 & 1689.2 & 2781.4 & 6748.2 & 2113.8 & 4798.8 & 9735.7 & 2153.1 & *\textbf{1878.5} & 7107.0 & 6576.5\\
&& 10
 & 2.6e+04 & 2.9e+04 & 8.9e+04 & 9.5e+04 & \textbf{4.1e+04} & 7.8e+04 & 4.8e+04 & 8.9e+04 & 6.0e+04 & 9.6e+04 & 9.3e+04\\
\hline
\multirow{8}{*}{\rotatebox[origin=c]{90}{Rosenbrock}}
 &\multirow{2}{*}{2}
 & 1
 & 0.0 & 0.0 & 0.0 & 0.0 & 0.0 & 0.0 & 0.0 & 0.0 & \textbf{0.0} & 0.1 & 0.0\\
&& 10
 & 0.0 & 0.0 & 1.0 & 0.0 & 0.1 & 0.0 & 0.0 & 0.2 & \textbf{0.0} & 1.4 & 0.0\\
 &\multirow{2}{*}{30}
 & 1
 & 40.5 & 39.0 & 157.2 & 43.3 & 30.2 & 1027.3 & 1485.6 & \textbf{28.1} & 39.5 & 589.2 & 303.3\\
&& 10
 & 383.3 & 183.9 & 1.2e+07 & 4.4e+06 & 813.3 & 4371.1 & 742.3 & 581.5 & \textbf{199.0} & 1.6e+06 & 3.9e+04\\
 &\multirow{2}{*}{100}
 & 1
 & 146.8 & 113.1 & 179.6 & 184.9 & \textbf{112.9} & 191.0 & 6830.5 & 157.2 & 117.9 & 2817.8 & 1484.5\\
&& 10
 & 3171.6 & 290.3 & 6.5e+07 & 6.5e+07 & \textbf{517.8} & 6.2e+05 & 9.7e+04 & 945.9 & 943.1 & 1.5e+07 & 3.5e+05\\
 &\multirow{2}{*}{1000}
 & 1
 & 1.2e+04 & 8931.6 & 1.2e+04 & 2.1e+05 & 1.1e+04 & 2.9e+04 & 2.1e+04 & 1.3e+04 & \textbf{9698.8} & 1.7e+05 & 1.3e+05\\
&& 10
 & 2.0e+07 & 1.1e+07 & 1.4e+09 & 2.2e+09 & 1.8e+07 & 2.5e+08 & 9.6e+07 & 3.1e+07 & \textbf{1.4e+07} & 1.0e+09 & 4.9e+08\\
\hline
\multirow{8}{*}{\rotatebox[origin=c]{90}{Sphere}}
 &\multirow{2}{*}{2}
 & 1
 & 0.0 & 0.0 & 0.0 & 0.0 & 0.0 & 0.0 & 0.0 & 0.0 & 0.0 & \textbf{0.0} & 0.0\\
&& 10
 & 0.0 & 0.0 & 0.0 & 0.0 & 0.0 & 0.0 & 0.0 & \textbf{0.0} & 0.0 & 0.0 & 0.0\\
 &\multirow{2}{*}{30}
 & 1
 & 0.0 & 0.0 & 0.7 & 0.0 & 0.0 & 0.3 & 4.6 & 0.0 & \textbf{0.0} & 4.2 & 1.6\\
&& 10
 & 2.1 & 0.0 & 1297.7 & 797.9 & 0.3 & 35.8 & 15.4 & 0.0 & \textbf{0.0} & 500.6 & 30.9\\
 &\multirow{2}{*}{100}
 & 1
 & 0.1 & 0.0 & 0.1 & 0.1 & 0.0 & 0.0 & 0.3 & 0.0 & \textbf{0.0} & 17.6 & 7.7\\
&& 10
 & 9.6 & 0.0 & 5234.4 & 5234.5 & 0.2 & 409.6 & 37.2 & 0.2 & \textbf{0.0} & 1610.9 & 222.3\\
 &\multirow{2}{*}{1000}
 & 1
 & 57.1 & 31.8 & 74.8 & 731.6 & 55.3 & 93.5 & 86.1 & 65.9 & \textbf{45.1} & 665.5 & 555.1\\
&& 10
 & 5484.4 & 3158.5 & 7.3e+04 & 8.7e+04 & 5481.0 & 5.2e+04 & 2.4e+04 & 6540.3 & \textbf{4459.2} & 6.7e+04 & 3.8e+04\\
\hline
\multirow{8}{*}{\rotatebox[origin=c]{90}{Linear}}
 &\multirow{2}{*}{2}
 & 1
 & -1614.0 & -1654.6 & -53.4 & -842.3 & \textbf{-3.1e+29} & -1547.5 & -1497.4 & -2.7e+16 & -8.7e+18 & -1.6e+08 & -6.9e+05\\
&& 10
 & -1930.6 & -1969.2 & -393.6 & -1172.9 & \textbf{-3.1e+29} & -1887.1 & -1837.6 & -1.7e+16 & -8.7e+18 & -1.6e+08 & -6.9e+05\\
 &\multirow{2}{*}{30}
 & 1
 & -6173.2 & -6274.7 & -202.7 & -343.7 & \textbf{-1.2e+30} & -5995.3 & -5620.6 & -1.8e+18 & -2.8e+19 & -1.1e+08 & -2.7e+06\\
&& 10
 & -7389.3 & -7446.0 & -1485.7 & -1626.6 & \textbf{-1.2e+30} & -7277.5 & -6903.6 & -1.3e+18 & -2.8e+19 & -1.1e+08 & -2.7e+06\\
 &\multirow{2}{*}{100}
 & 1
 & -1.1e+04 & -1.1e+04 & -332.1 & -330.2 & \textbf{-2.5e+30} & -1.1e+04 & -1.0e+04 & -8.2e+17 & -1.2e+20 & -1.1e+10 & -4.9e+06\\
&& 10
 & -1.3e+04 & -1.3e+04 & -2346.8 & -2345.0 & \textbf{-2.5e+30} & -1.3e+04 & -1.2e+04 & -6.9e+18 & -1.2e+20 & -1.1e+10 & -5.0e+06\\
 &\multirow{2}{*}{1000}
 & 1
 & -2.6e+04 & -2.5e+04 & -759.0 & -536.5 & \textbf{-4.7e+30} & -2.4e+04 & -2.4e+04 & -2.2e+18 & -2.0e+20 & -1.2e+11 & -1.1e+07\\
&& 10
 & -3.0e+04 & -2.9e+04 & -5369.0 & -5146.5 & \textbf{-4.7e+30} & -2.9e+04 & -2.8e+04 & -2.4e+18 & -2.0e+20 & -1.2e+11 & -1.1e+07\\

\bottomrule

\end{tabular}

}

\caption{
A genetic algorithm's final elite function value,
on various functions and population initializations 
using different mutation rate control strategies.
This metric quantifies good the final solution
found by the GA is.
The results are averaged over 40 seeds.
The best value is shown in bold.
A statistical t-test is performed on the best method and if the resulting p-value
is less than $0.05$ versus \textit{all} other methods, 
the result is considered significant and 
shown with an asterisk (*) in front of it. 
Methods marked with \textdagger~are oracles for benchmark
and are not compared against because they use foresight during evolution.
GESMR outperforms previous methods on most tasks, often significantly.
}
\label{tbl::data_table}
\end{table*}

\begin{table*}[t]
\centering

\scalebox{0.7}{

\begin{tabular}{c | r r | r r r r r r r r r r r}
 &
 \multicolumn{1}{c}{Dim}&
 \multicolumn{1}{c}{Std}&
 \multicolumn{1}{c}{\textbf{\textsuperscript{\textdagger}OFMR}}&
 \multicolumn{1}{c}{\textbf{\textsuperscript{\textdagger}LAMR-100}}&
 \multicolumn{1}{c}{\textbf{FMR}}&
 \multicolumn{1}{c}{\textbf{1CMR}}&
 \multicolumn{1}{c}{\textbf{15MR}}&
 \multicolumn{1}{c}{\textbf{UCB/5}}&
 \multicolumn{1}{c}{\textbf{UCB/10}}&
 \multicolumn{1}{c}{\textbf{SAMR}}&
 \multicolumn{1}{c}{\textbf{GESMR}}&
 \multicolumn{1}{c}{\textbf{GESMR-AVG}}&
 \multicolumn{1}{c}{\textbf{GESMR-FIX}}
 \\
\toprule

\multirow{8}{*}{\rotatebox[origin=c]{90}{Ackley}}
 &\multirow{2}{*}{2}
 & 1
 & 0.1 & 0.1 & \textbf{0.0} & 0.1 & 0.0 & 0.0 & 0.0 & 0.0 & 0.0 & 0.0 & 0.0\\
&& 10
 & 1.2 & 1.2 & 2.2 & \textbf{0.2} & 0.3 & 0.3 & 0.4 & 0.3 & 0.5 & 0.3 & 0.4\\
 &\multirow{2}{*}{30}
 & 1
 & 1.9 & 1.3 & 3.1 & 2.9 & 1.7 & 2.6 & 3.1 & 2.7 & \textbf{1.7} & 3.6 & 3.5\\
&& 10
 & 8.7 & 5.9 & 15.7 & 15.4 & 7.4 & 8.2 & 7.8 & 12.4 & *\textbf{4.9} & 16.0 & 10.8\\
 &\multirow{2}{*}{100}
 & 1
 & 3.3 & 2.8 & 3.4 & 3.4 & \textbf{2.8} & 3.4 & 3.5 & 3.1 & 2.9 & 4.1 & 3.9\\
&& 10
 & 8.5 & 5.8 & 16.5 & 16.6 & 7.9 & 9.5 & 9.0 & 16.5 & *\textbf{6.8} & 17.4 & 14.3\\
 &\multirow{2}{*}{1000}
 & 1
 & 4.2 & 4.0 & 4.1 & 4.9 & 4.1 & 4.4 & 4.5 & 4.2 & \textbf{4.0} & 5.0 & 4.8\\
&& 10
 & 15.3 & 17.0 & 17.7 & 18.5 & 18.1 & 18.2 & 18.1 & 17.7 & *\textbf{17.6} & 18.4 & 18.3\\
\hline
\multirow{8}{*}{\rotatebox[origin=c]{90}{Griewank}}
 &\multirow{2}{*}{2}
 & 1
 & 0.0 & 0.0 & 0.0 & 0.0 & \textbf{0.0} & 0.0 & 0.0 & 0.0 & 0.0 & 0.0 & 0.0\\
&& 10
 & 0.0 & 0.0 & 0.0 & \textbf{0.0} & 0.0 & 0.0 & 0.0 & 0.0 & 0.0 & 0.0 & 0.0\\
 &\multirow{2}{*}{30}
 & 1
 & 0.1 & 0.1 & 0.3 & 0.1 & 0.1 & 0.2 & 0.2 & 0.1 & \textbf{0.1} & 0.4 & 0.3\\
&& 10
 & 0.7 & 0.5 & 1.4 & 1.3 & 0.8 & 0.7 & 0.8 & 0.6 & \textbf{0.5} & 1.2 & 1.0\\
 &\multirow{2}{*}{100}
 & 1
 & 0.2 & 0.1 & 0.2 & 0.2 & 0.1 & 0.2 & 0.2 & 0.1 & \textbf{0.1} & 0.5 & 0.4\\
&& 10
 & 1.3 & 0.6 & 2.5 & 2.5 & 0.7 & 0.9 & 1.0 & 0.8 & *\textbf{0.6} & 1.9 & 1.4\\
 &\multirow{2}{*}{1000}
 & 1
 & 0.5 & 0.5 & 0.6 & 1.1 & 0.6 & 0.8 & 0.8 & 0.6 & *\textbf{0.5} & 1.0 & 1.0\\
&& 10
 & 7.8 & 7.5 & 21.2 & 23.0 & 8.5 & 12.9 & 11.0 & 9.1 & \textbf{8.2} & 21.2 & 15.9\\
\hline
\multirow{8}{*}{\rotatebox[origin=c]{90}{Rastrigin}}
 &\multirow{2}{*}{2}
 & 1
 & 0.3 & 0.3 & 0.0 & 0.2 & \textbf{0.0} & 0.0 & 0.1 & 0.0 & 0.1 & 0.0 & 0.1\\
&& 10
 & 0.6 & 0.9 & 6.9 & \textbf{0.5} & 1.8 & 1.0 & 1.6 & 1.8 & 0.8 & 3.2 & 1.1\\
 &\multirow{2}{*}{30}
 & 1
 & 71.3 & 47.9 & 59.1 & \textbf{50.9} & 67.0 & 178.1 & 168.9 & 52.6 & 60.0 & 130.1 & 116.2\\
&& 10
 & 597.1 & 358.2 & 1606.6 & 1566.1 & 548.3 & 436.8 & 458.0 & 1204.7 & *\textbf{356.6} & 1575.0 & 677.7\\
 &\multirow{2}{*}{100}
 & 1
 & 301.5 & 182.1 & 218.9 & 215.0 & 227.6 & 796.5 & 772.1 & \textbf{181.3} & 198.4 & 543.1 & 483.8\\
&& 10
 & 2199.1 & 1500.1 & 7090.7 & 7080.6 & 2341.7 & 2303.2 & 1911.9 & 5164.7 & \textbf{1748.6} & 7433.1 & 3636.0\\
 &\multirow{2}{*}{1000}
 & 1
 & 4864.2 & 3918.3 & 4507.4 & 8541.2 & 4501.6 & 6713.5 & 9800.1 & 4520.6 & *\textbf{4187.1} & 8215.6 & 7901.7\\
&& 10
 & 4.5e+04 & 4.6e+04 & 9.1e+04 & 9.7e+04 & \textbf{5.7e+04} & 8.6e+04 & 6.4e+04 & 9.2e+04 & 6.7e+04 & 9.6e+04 & 9.4e+04\\
\hline
\multirow{8}{*}{\rotatebox[origin=c]{90}{Rosenbrock}}
 &\multirow{2}{*}{2}
 & 1
 & 0.0 & 0.0 & 0.0 & 0.0 & 0.0 & 0.0 & 0.0 & 0.0 & \textbf{0.0} & 0.1 & 0.0\\
&& 10
 & 0.9 & 1.1 & 5.8 & \textbf{0.5} & 1.1 & 1.3 & 2.0 & 1.7 & 0.7 & 2.0 & 1.0\\
 &\multirow{2}{*}{30}
 & 1
 & 485.7 & 262.3 & 977.9 & 386.4 & 323.8 & 1179.8 & 1643.0 & 339.9 & \textbf{243.9} & 1325.0 & 1064.5\\
&& 10
 & 2.1e+06 & 1.1e+06 & 1.5e+07 & 1.0e+07 & 1.1e+06 & \textbf{8.8e+05} & 1.2e+06 & 1.6e+06 & 9.1e+05 & 7.3e+06 & 2.2e+06\\
 &\multirow{2}{*}{100}
 & 1
 & 2685.3 & 1206.5 & 3769.2 & 3735.7 & 1390.2 & 1590.9 & 8349.1 & 1686.2 & \textbf{1360.1} & 7236.0 & 5483.7\\
&& 10
 & 1.9e+07 & 5.3e+06 & 9.3e+07 & 9.3e+07 & 6.4e+06 & 6.1e+06 & \textbf{5.6e+06} & 7.7e+06 & 5.8e+06 & 4.6e+07 & 1.7e+07\\
 &\multirow{2}{*}{1000}
 & 1
 & 6.2e+04 & 6.0e+04 & 8.9e+04 & 2.6e+05 & 7.3e+04 & 1.1e+05 & 8.8e+04 & 7.8e+04 & \textbf{6.6e+04} & 2.2e+05 & 2.0e+05\\
&& 10
 & 3.7e+08 & 3.7e+08 & 1.8e+09 & 2.2e+09 & 4.3e+08 & 1.2e+09 & 7.3e+08 & 4.7e+08 & \textbf{4.0e+08} & 1.5e+09 & 1.1e+09\\
\hline
\multirow{8}{*}{\rotatebox[origin=c]{90}{Sphere}}
 &\multirow{2}{*}{2}
 & 1
 & 0.0 & 0.0 & 0.0 & 0.0 & 0.0 & 0.0 & 0.0 & 0.0 & \textbf{0.0} & 0.0 & 0.0\\
&& 10
 & 0.1 & 0.1 & 0.1 & \textbf{0.0} & 0.0 & 0.0 & 0.0 & 0.0 & 0.0 & 0.0 & 0.0\\
 &\multirow{2}{*}{30}
 & 1
 & 3.2 & 1.8 & 6.3 & 2.2 & 1.8 & 3.1 & 5.9 & 1.8 & \textbf{1.5} & 8.6 & 5.9\\
&& 10
 & 318.7 & 180.9 & 1420.8 & 1153.4 & 182.3 & 162.9 & 163.5 & 212.7 & \textbf{142.9} & 1027.2 & 327.3\\
 &\multirow{2}{*}{100}
 & 1
 & 18.9 & 5.1 & 18.8 & 18.7 & 6.3 & 9.7 & 15.9 & 7.0 & \textbf{6.3} & 37.4 & 27.2\\
&& 10
 & 1892.3 & 512.6 & 6089.4 & 6088.1 & 668.8 & 1019.9 & 779.4 & 755.9 & \textbf{604.3} & 3600.2 & 1758.3\\
 &\multirow{2}{*}{1000}
 & 1
 & 270.4 & 259.4 & 382.4 & 809.3 & 304.2 & 314.0 & 319.9 & 327.1 & \textbf{286.3} & 772.1 & 711.8\\
&& 10
 & 2.7e+04 & 2.6e+04 & 8.1e+04 & 8.8e+04 & 3.0e+04 & 6.8e+04 & 4.6e+04 & 3.3e+04 & \textbf{2.9e+04} & 7.8e+04 & 6.0e+04\\
\hline
\multirow{8}{*}{\rotatebox[origin=c]{90}{Linear}}
 &\multirow{2}{*}{2}
 & 1
 & -836.1 & -835.5 & -45.4 & -434.1 & \textbf{-6.0e+27} & -766.9 & -710.5 & -1.2e+15 & -3.0e+17 & -1.2e+07 & -3.5e+05\\
&& 10
 & -1153.3 & -1150.8 & -385.6 & -764.9 & \textbf{-6.0e+27} & -1106.6 & -1050.8 & -7.8e+14 & -3.0e+17 & -1.2e+07 & -3.5e+05\\
 &\multirow{2}{*}{30}
 & 1
 & -3190.9 & -3238.2 & -171.9 & -240.2 & \textbf{-2.4e+28} & -2967.6 & -2671.8 & -5.5e+16 & -9.5e+17 & -1.1e+07 & -1.4e+06\\
&& 10
 & -4409.1 & -4413.7 & -1454.9 & -1523.2 & \textbf{-2.4e+28} & -4249.9 & -3954.8 & -3.9e+16 & -9.5e+17 & -1.2e+07 & -1.4e+06\\
 &\multirow{2}{*}{100}
 & 1
 & -5758.9 & -5665.5 & -277.3 & -275.9 & \textbf{-4.7e+28} & -5314.8 & -4956.6 & -2.9e+16 & -5.0e+18 & -1.4e+09 & -2.6e+06\\
&& 10
 & -7593.1 & -7473.6 & -2292.1 & -2290.6 & \textbf{-4.7e+28} & -7323.8 & -6971.4 & -2.2e+17 & -5.0e+18 & -1.4e+09 & -2.6e+06\\
 &\multirow{2}{*}{1000}
 & 1
 & -1.3e+04 & -1.3e+04 & -634.3 & -524.3 & \textbf{-9.3e+28} & -1.2e+04 & -1.1e+04 & -7.4e+16 & -6.3e+18 & -1.8e+10 & -5.7e+06\\
&& 10
 & -1.7e+04 & -1.7e+04 & -5244.2 & -5134.2 & \textbf{-9.3e+28} & -1.7e+04 & -1.6e+04 & -8.7e+16 & -6.3e+18 & -1.8e+10 & -5.7e+06\\

\bottomrule

\end{tabular}

}

\caption{
A genetic algorithm's \textit{average} (over generations) elite function value,
on various functions and population initializations 
using different mutation rate control strategies.
This metric quantifies how quickly the GA converged to good solutions.
The results are averaged over 40 seeds.
The best value is shown in bold.
A statistical t-test is performed on the best method and if the resulting p-value
is less than $0.05$ versus \textit{all} other methods, 
the result is considered significant and 
shown with an asterisk (*) in front of it. 
Methods marked with \textdagger~are oracles for benchmark
and are not compared against because they use foresight during evolution.
GESMR outperforms previous methods on most tasks, often significantly.
15MR outperforms GESMR in the Linear function landscapes
because 15MR directly doubles the MR every generation
while GESMR relies on a mutation that may double the MRs every generation.
}
\label{tbl::data_table_auc}
\end{table*}

\begin{table*}[t]
\centering

\scalebox{0.7}{

\begin{tabular}{c | r r | r r r r r r r r r r r}
 &
 \multicolumn{1}{c}{Dim}&
 \multicolumn{1}{c}{Std}&
 \multicolumn{1}{c}{\textbf{\textsuperscript{\textdagger}OFMR}}&
 \multicolumn{1}{c}{\textbf{\textsuperscript{\textdagger}LAMR-100}}&
 \multicolumn{1}{c}{\textbf{FMR}}&
 \multicolumn{1}{c}{\textbf{1CMR}}&
 \multicolumn{1}{c}{\textbf{15MR}}&
 \multicolumn{1}{c}{\textbf{UCB/5}}&
 \multicolumn{1}{c}{\textbf{UCB/10}}&
 \multicolumn{1}{c}{\textbf{SAMR}}&
 \multicolumn{1}{c}{\textbf{GESMR}}&
 \multicolumn{1}{c}{\textbf{GESMR-AVG}}&
 \multicolumn{1}{c}{\textbf{GESMR-FIX}}
 \\
\toprule

\multirow{8}{*}{\rotatebox[origin=c]{90}{Ackley}}
 &\multirow{2}{*}{2}
 & 1
 & 0.2 & 0.0 & *\textbf{5.3} & 38.6 & 29.0 & 31.5 & 44.2 & 44.8 & 15.5 & 65.1 & 47.3\\
&& 10
 & 0.7 & 0.0 & *\textbf{1.4} & 9.5 & 20.0 & 13.7 & 12.0 & 61.0 & 10.2 & 95.3 & 14.0\\
 &\multirow{2}{*}{30}
 & 1
 & 4.1 & 0.0 & 2.3 & 1.9 & 1.9 & 9.9 & 8.6 & 4.4 & \textbf{1.2} & 14.2 & 16.3\\
&& 10
 & 1.1 & 0.0 & 8.3 & 3.2 & 2.3 & 7.0 & 6.3 & 12.6 & \textbf{1.4} & 29.4 & 4.5\\
 &\multirow{2}{*}{100}
 & 1
 & 9.4 & 0.0 & 3.2 & 3.2 & 4.8 & 9.9 & 12.7 & 1.4 & \textbf{0.9} & 29.1 & 34.1\\
&& 10
 & 7.1 & 0.0 & 6.0 & 6.0 & 3.7 & 9.0 & 8.5 & 13.9 & *\textbf{0.4} & 77.1 & 14.7\\
 &\multirow{2}{*}{1000}
 & 1
 & 0.5 & 0.0 & 1.0 & 3.0 & 0.5 & 3.2 & 4.1 & 0.7 & *\textbf{0.2} & 105.4 & 28.7\\
&& 10
 & 11.7 & 0.0 & 3.1 & 9.0 & 14.5 & 16.5 & 11.7 & 5.1 & \textbf{3.0} & 89.9 & 23.1\\
\hline
\multirow{8}{*}{\rotatebox[origin=c]{90}{Griewank}}
 &\multirow{2}{*}{2}
 & 1
 & 0.0 & 0.0 & \textbf{5.3} & 38.6 & 58.5 & 17.3 & 16.4 & 16.0 & 20.3 & 15.5 & 47.6\\
&& 10
 & 0.5 & 0.0 & 7.9 & \textbf{2.8} & 8.1 & 9.5 & 7.9 & 76.5 & 5.1 & 76.0 & 5.0\\
 &\multirow{2}{*}{30}
 & 1
 & 1.2 & 0.0 & 1.3 & 1.7 & 3.1 & 7.9 & 7.7 & 0.8 & \textbf{0.6} & 8.4 & 18.6\\
&& 10
 & 2.2 & 0.0 & 6.9 & 2.8 & 4.1 & 8.6 & 7.7 & 0.9 & *\textbf{0.5} & 8.8 & 6.8\\
 &\multirow{2}{*}{100}
 & 1
 & 1.9 & 0.0 & 1.5 & 1.5 & 0.8 & 8.9 & 8.1 & 0.7 & *\textbf{0.2} & 9.0 & 23.1\\
&& 10
 & 2.6 & 0.0 & 5.6 & 5.6 & 0.9 & 7.8 & 8.0 & 0.7 & *\textbf{0.2} & 6.3 & 10.2\\
 &\multirow{2}{*}{1000}
 & 1
 & 0.3 & 0.0 & 0.4 & 7.4 & 0.4 & 4.2 & 3.7 & 1.0 & \textbf{0.2} & 37.1 & 18.0\\
&& 10
 & 0.4 & 0.0 & 8.0 & 26.0 & 0.5 & 7.7 & 7.3 & 0.7 & *\textbf{0.2} & 151.9 & 3.7\\
\hline
\multirow{8}{*}{\rotatebox[origin=c]{90}{Rastrigin}}
 &\multirow{2}{*}{2}
 & 1
 & 0.0 & 0.0 & *\textbf{5.3} & 38.6 & 39.7 & 30.1 & 44.2 & 21.9 & 12.6 & 25.4 & 47.5\\
&& 10
 & 0.8 & 0.0 & 9.7 & \textbf{0.9} & 11.7 & 3.2 & 3.1 & 80.6 & 6.0 & 99.4 & 2.6\\
 &\multirow{2}{*}{30}
 & 1
 & 2.1 & 0.0 & 2.4 & 6.0 & 2.8 & 22.1 & 24.4 & 1.4 & \textbf{1.2} & 7.0 & 31.9\\
&& 10
 & 8.6 & 0.0 & 6.4 & 4.7 & 16.9 & 16.0 & 16.2 & 6.1 & *\textbf{0.8} & 19.2 & 15.6\\
 &\multirow{2}{*}{100}
 & 1
 & 1.6 & 0.0 & 3.1 & 3.1 & 0.9 & 22.5 & 22.0 & 1.6 & \textbf{0.8} & 6.9 & 36.5\\
&& 10
 & 15.8 & 0.0 & 5.8 & 5.8 & 25.6 & 28.0 & 23.2 & 4.0 & *\textbf{0.9} & 27.4 & 27.9\\
 &\multirow{2}{*}{1000}
 & 1
 & 0.4 & 0.0 & 1.1 & 2.6 & 0.5 & 3.3 & 18.5 & 0.7 & \textbf{0.2} & 32.4 & 29.9\\
&& 10
 & 6.1 & 0.0 & 5.4 & 17.2 & 8.4 & 13.6 & 10.0 & 9.5 & *\textbf{2.9} & 118.7 & 13.5\\
\hline
\multirow{8}{*}{\rotatebox[origin=c]{90}{Rosenbrock}}
 &\multirow{2}{*}{2}
 & 1
 & 0.5 & 0.0 & \textbf{1.9} & 19.6 & 19.8 & 10.2 & 24.0 & 5.7 & 12.2 & 36.4 & 25.8\\
&& 10
 & 1.6 & 0.0 & 8.8 & \textbf{3.7} & 16.3 & 7.9 & 10.5 & 19.2 & 12.6 & 54.5 & 6.0\\
 &\multirow{2}{*}{30}
 & 1
 & 2.2 & 0.0 & 2.0 & 2.5 & 2.3 & 12.7 & 17.1 & 1.0 & \textbf{0.6} & 10.0 & 19.3\\
&& 10
 & 1.5 & 0.0 & 8.2 & 3.3 & 2.5 & 5.5 & 4.7 & 1.3 & \textbf{0.8} & 10.3 & 5.3\\
 &\multirow{2}{*}{100}
 & 1
 & 2.3 & 0.0 & 1.8 & 1.8 & 0.8 & 3.2 & 13.8 & 0.8 & \textbf{0.5} & 8.5 & 23.5\\
&& 10
 & 2.2 & 0.0 & 5.5 & 5.5 & 0.9 & 9.3 & 4.2 & 0.7 & *\textbf{0.3} & 10.3 & 9.4\\
 &\multirow{2}{*}{1000}
 & 1
 & 0.5 & 0.0 & 0.4 & 7.2 & 0.4 & 1.0 & 0.9 & 0.7 & \textbf{0.2} & 35.0 & 18.6\\
&& 10
 & 0.4 & 0.0 & 7.6 & 25.3 & 0.4 & 2.5 & 1.5 & 0.8 & *\textbf{0.2} & 15.0 & 3.8\\
\hline
\multirow{8}{*}{\rotatebox[origin=c]{90}{Sphere}}
 &\multirow{2}{*}{2}
 & 1
 & 0.5 & 0.0 & *\textbf{4.1} & 35.0 & 35.9 & 31.1 & 17.8 & 65.9 & 11.0 & 205.5 & 43.3\\
&& 10
 & 0.4 & 0.0 & *\textbf{0.4} & 12.1 & 19.0 & 14.0 & 7.9 & 53.7 & 9.7 & 106.2 & 17.2\\
 &\multirow{2}{*}{30}
 & 1
 & 3.0 & 0.0 & 2.2 & 4.0 & 2.6 & 7.0 & 15.7 & \textbf{0.6} & 0.7 & 8.8 & 24.6\\
&& 10
 & 2.7 & 0.0 & 6.8 & 3.0 & 2.1 & 8.0 & 6.9 & 1.0 & \textbf{0.7} & 10.2 & 7.7\\
 &\multirow{2}{*}{100}
 & 1
 & 3.2 & 0.0 & 3.2 & 3.2 & 1.2 & 2.5 & 4.7 & 0.6 & *\textbf{0.5} & 7.2 & 30.8\\
&& 10
 & 4.8 & 0.0 & 5.7 & 5.7 & 1.5 & 14.6 & 5.5 & 0.6 & \textbf{0.5} & 7.5 & 14.4\\
 &\multirow{2}{*}{1000}
 & 1
 & 0.4 & 0.0 & 0.5 & 7.9 & 0.4 & 1.0 & 1.0 & 0.7 & \textbf{0.2} & 54.6 & 17.4\\
&& 10
 & 0.4 & 0.0 & 8.1 & 26.2 & 0.4 & 3.5 & 1.8 & 0.7 & \textbf{0.2} & 61.1 & 3.6\\
\hline
 
\bottomrule

\end{tabular}

}

\caption{
A genetic algorithm's mean squared error log MR compared to empirical estimate of
the long-term optimal log MR (the log MR from LAMR-100),
on various functions and population initializations 
using different mutation rate control strategies.
This metric quantifies how optimal (lower is better) the MRs produced are for the long-term.
The results are averaged over 40 seeds.
The best value is shown in bold.
A statistical t-test is performed on the best method and if the resulting p-value
is less than $0.05$ versus \textit{all} other methods, 
the result is considered significant and 
shown with an asterisk (*) in front of it. 
Methods marked with \textdagger~are oracles for benchmark
and are not compared against because they use foresight during evolution.
GESMR consistently outperforms other methods,
showing that GESMR is producing MRs optimal for the long-term.
The Linear function is not shown because LAMR-100 is not
able to produce the true optimal MR (goes to infinity),
so comparisons to LAMR-100 in a Linear function does not make sense.
}
\label{tbl::data_table_mselog}
\end{table*}

\section{Details of the Function Optimization Experiment}

Detailed definitions of the test functions are given in this Appendix, followed by detailed results.

\subsection{Test Function Definitions}

\begin{description}
\vspace*{1ex}
    \item \textbf{Ackley}:
\vspace*{-1ex}
\begin{align}
    f(x) = 
    &-a\exp{\akparen{-b\sqrt{\frac1d \sum_{i=1}^d x_i^2}}}\\
    &-\exp{\akparen{\frac1d \sum_{i=1}^d \cos{(cx_i)}}}+a+\exp{(1)},
\end{align}
with $a=20, b=0.2, c=2\pi$.

\vspace*{1ex}
\item \textbf{Griewank}:
\vspace*{-1ex}
\begin{align}
    f(x) = &\sum_{i=1}^d\frac{x_i^2}{4000}-\prod_{i=1}^d \cos{\akparen{\frac{x_i}{\sqrt{i}}}}+1.
\end{align}

\vspace*{1ex}
\item \textbf{Rastrigin}:
\vspace*{-1ex}
\begin{align}
    f(x) = &10d+\sum_{i=1}^d\akbrack{x_i^2-10\cos{(2\pi x_i)}}.
\end{align}

\vspace*{1ex}
\item \textbf{Rosenbrock}:
\vspace*{-1ex}
\begin{align}
    f(x) = &\sum_{i=1}^{d-1} \akbrack{100(x_{i+1}-x_i^2)^2+(x_i-1)^2}.
\end{align}

\vspace*{1ex}
\item \textbf{Sphere}:
\vspace*{-1ex}
\begin{align}
    f(x) = &\sum_{i=1}^d x_i^2.
\end{align}

\vspace*{1ex}
\item \textbf{Linear}:
\vspace*{-1ex}
\begin{align}
    f(x) = &\sum_{i=1}^d x_i.
\end{align}

\end{description}

\subsection{Function Optimization Results}
The full results of the test optimization functions
are shown in 
Tables~\ref{tbl::data_table},~\ref{tbl::data_table_auc},~\ref{tbl::data_table_mselog}.
Table~\ref{tbl::data_table} summarizes
the final elite function value achieved by each algorithm in all the 
test function optimization runs.
Table~\ref{tbl::data_table_auc} summarizes
the average elite function value over generations 
from each algorithm in all the 
test function optimization runs.
Table~\ref{tbl::data_table_mselog} summarizes
the mean squared error between the average log MR
of a given algorithm with the log MR of
LAMR-$100$ (the oracle long-term MR).
These result show that
GESMR outperforms other methods 
in the high dimensional and rugged function
landscapes.
GESMR also produces MRs that match the oracle
long-term optimal MR,
showing that GESMR empirically produces MRs
suited for the long-term.
GESMR also scales well to the high dimensions
of neuroevolution.

\clearpage
\clearpage
\clearpage
\clearpage

\section{Details of the Image Classification Experiment}
MNIST and Fashion-MNIST are common image classification
\linebreak
datasets of hand written digits and clothes, respectively 
\cite{LeCun1998-wt, Xiao2017-kj}.
The inputs are 28$\times$x28 grayscale images and the output is one of ten classification labels.
Both datasets consists of 60,000 training images 10,000 evaluation images.
For these problems, $f$ is the 
negative log-likelihood function (i.e.\ the cross-entropy loss) 
as is common in supervised learning.

The evolved neural-network architecture contains three 3$\times$3
\linebreak
Conv2D layers with 10 channels, each one followed
by a 2$\times$2 MaxPooling layer and a ReLU nonlinearity.
The resulting feature maps are collapsed into a vector
and fed into a 10$\times$10 Dense layer followed by a 
ReLU and another 10$\times$10 Dense layer. after which they are fed into a Softmax function to output ten class probabilities.

\section{Details of the Reinforcement Learning Experiment}
CartPole, Pendulum, Acrobot, and MountainCar
are common reinforcement learning control tasks.
In each of these tasks, the performance of a robot controller is evaluated
in a simulated environment \cite{Brockman2016-bx}.
CartPole consists of balancing a single pole on a one-dimensional cart
for as long as possible or until 200 timesteps have passed,
rewarded for how long the pole stays up.
Pendulum consists of a robot trying to swing up a pendulum,
rewarded for maintaining as much of an upward angle
as possible.
Acrobot consists of moving a joint with two links
such that the bottom link swings to as high as possible.
MountainCar consists of a car with a weak engine in valley between two hills;
it must be moved back and forth between the hills to gain enough energy to reach the top of the target hill.
In all environments, $f$ is the 
negative cumulative reward of an episode 
(averaged over five episodes).

The evolved neural-network architecture contains a dense layer to 
map the number of observations to 128 hidden neurons with a
ReLU activation function, and another dense layer mapping the 128 neurons to the number of actions.
If the action space is discrete, a Softmax function 
is applied to output action probabilities.

The detailed results are shown in Figure~\ref{fig:fits_mrs_rl}.

\begin{figure}[t]
\centering
\includegraphics[width=\columnwidth]{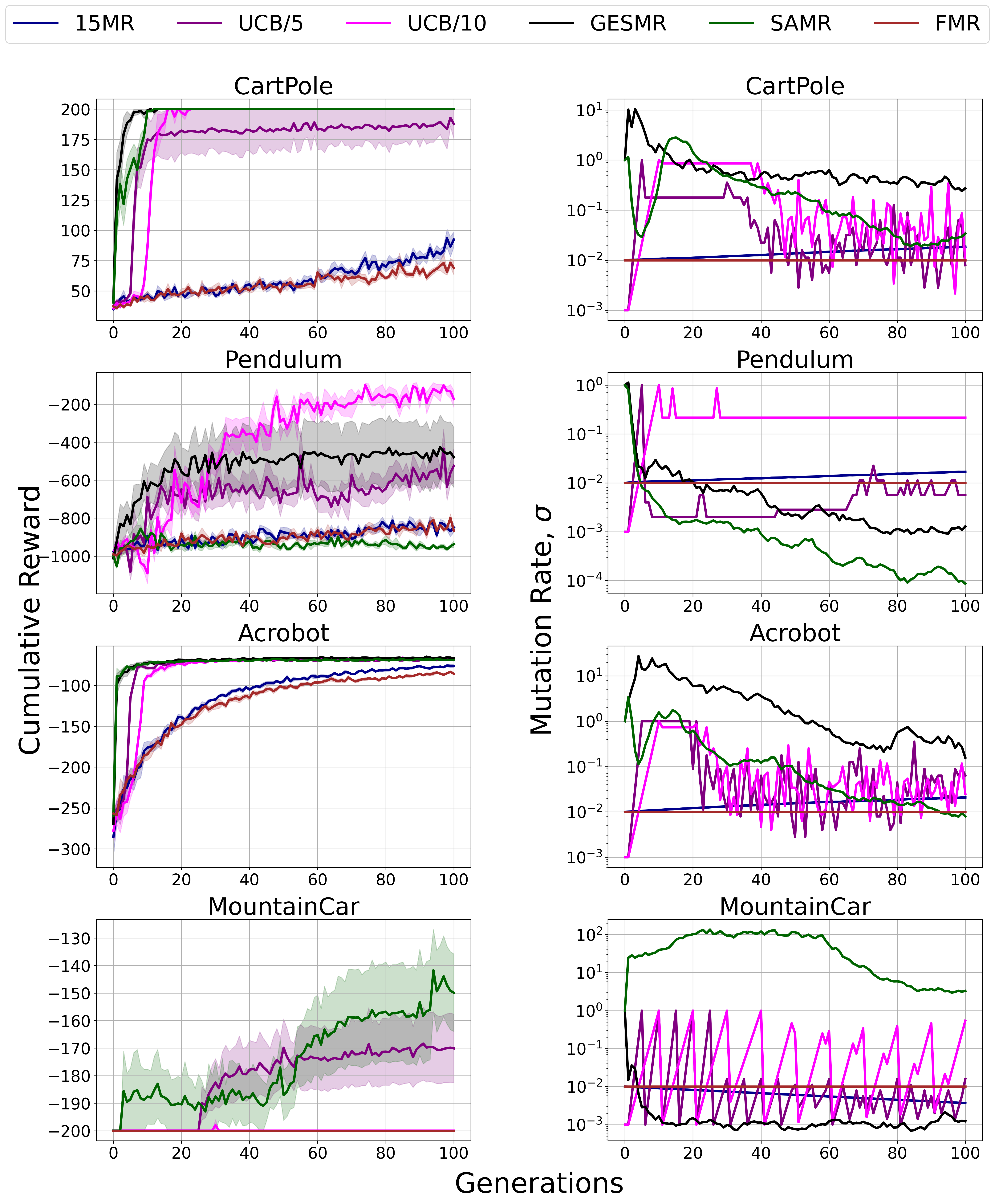}
\caption{
Elite function value and average mutation rate (MR)
over generations of neuroevolution
using different MR control strategies
applied to the reinforcement learning control tasks
of CartPole, Pendulum, Acrobot, and MountainCar.
GESMR outperforms most other methods in CartPole,
Pendulum, and Acrobot, but fails in MountainCar.
}
\label{fig:fits_mrs_rl}
\end{figure}

\end{document}


\title{
Appendix of Effective Mutation Rate Adaptation\\through Group Elite Selection
}






\renewcommand{\shortauthors}{Kumar, et al.}

  





\maketitle


\appendix

\section{General Experiment Setup}
All algorithms for all experiments 
(except the group size ablation experiment)
are run with a population size of $N+1=101$.
The test optimization problems are run for 
$T\in\akset{100, 300, 1000, 2500}$ generations with
problem dimensionality $d\in\akset{2, 30, 100, 1000}$, respectively.
The Linear function is always only run for $T=100$ generations.
The MNIST/Fashion-MNIST experiments are run for $T=1000$
generations.
All reinforcement learning experiments are run for $T=100$
generations.

\textbf{OFMR}
finds the optimal fixed MR
using a grid search over a logarithmic range
of ten MRs ranging from \num{1e-3} to \num{1}.
For each MR in the grid search, an entire evolution is run to evaluate it.
The MR whose evolution provides the best final elite function value is picked as
the optimal fixed MR, and another fixed MR
evolution is run with this MR value.

\textbf{LAMR-$G$}
changes the MR every $G$ generations,
and picks the MR according to a grid search
over a logarithmic range of 10 MRs ranging from 
\num{1e-3} to \num{1}.
For each MR in the grid search, 
the current population is used to initialize another evolution run
that ooks ahead for $G$ generations.
The MR whose evolution provides the best final elite is
used for the next $G$ generations in the main evolution run.
In this way, LAMR-$G$ is able to adapt MRs for the long-term
by directly looking ahead $G$ generations and picking an that MR performs the best.

\textbf{FMR} sets the MR to a fixed \num{1e-2}, 
as is commonly done when the user is left to define an MR. 

\textbf{1CMR} sets the MR to a fixed $1/d$
where $d$ is the dimensionality of the solution space
\cite{Ochoa2002-ni}.
The goal is to search carefully in problems with high dimensionality
and explore more in problems with low dimensionality.

\textbf{15MR} 
starts with the MR equal to \num{1e-2} and 
adapts MRs based on the percentage of beneficial mutations
in the current generation (i.e.\ those that result in a negative function value change)
If the percentage is greater than 1/5, the MR is doubled,
else it is cut in half.
This factor of two is chosen to match the meta-MRs in SAMR and GESMR
in order to compare adaptability fairly between methods.

\textbf{UCB/$R$}
creates a multi-armed bandit problem with $R$ arms
corresponding to MRs that are spaced logarithmically between
\linebreak
\num{1e-3} and 1.
The upper confidence bound (UCB) algorithm is utilized to solve the problem.
At each generation, an MR is sampled from UCB;
the reward that is reported back is the \textit{best} (lowest)
change in function value from mutations
for the current generation.

With \textbf{SAMR},
solutions are paired up with MRs spaced logarithmically between \num{1e-3} and \num{1e3}.
The solutions are mutated according to their assigned MR
and the MRs are mutated with the same
equation as with GESMR, using the meta-MR $\tau=2$.

With \textbf{GESMR},
the population of MRs are initialized
by spacing them logarithmically between \num{1e-3} and \num{1e3}.
They are mutated using the meta-MR $\tau=2$.

\begin{table*}[t]
\centering

\scalebox{0.7}{

\begin{tabular}{c | r r | r r r r r r r r r r r}
 &
 \multicolumn{1}{c}{Dim}&
 \multicolumn{1}{c}{Std}&
 \multicolumn{1}{c}{\textbf{\textsuperscript{\textdagger}OFMR}}&
 \multicolumn{1}{c}{\textbf{\textsuperscript{\textdagger}LAMR-100}}&
 \multicolumn{1}{c}{\textbf{FMR}}&
 \multicolumn{1}{c}{\textbf{1CMR}}&
 \multicolumn{1}{c}{\textbf{15MR}}&
 \multicolumn{1}{c}{\textbf{UCB/5}}&
 \multicolumn{1}{c}{\textbf{UCB/10}}&
 \multicolumn{1}{c}{\textbf{SAMR}}&
 \multicolumn{1}{c}{\textbf{GESMR}}&
 \multicolumn{1}{c}{\textbf{GESMR-AVG}}&
 \multicolumn{1}{c}{\textbf{GESMR-FIX}}
 \\
\toprule

\multirow{8}{*}{\rotatebox[origin=c]{90}{Ackley}}
 &\multirow{2}{*}{2}
 & 1
 & 0.0 & 0.0 & 0.0 & 0.0 & 0.0 & 0.0 & 0.0 & \textbf{0.0} & 0.0 & 0.0 & 0.0\\
&& 10
 & 0.5 & 0.5 & 1.5 & 0.0 & 0.1 & 0.1 & 0.0 & \textbf{0.0} & 0.0 & 0.0 & 0.0\\
 &\multirow{2}{*}{30}
 & 1
 & 1.2 & 0.0 & 2.8 & 2.7 & \textbf{0.2} & 2.2 & 2.7 & 2.4 & 0.8 & 3.2 & 3.0\\
&& 10
 & 2.7 & 1.4 & 15.2 & 15.3 & 3.1 & 4.9 & 3.5 & 11.0 & *\textbf{1.0} & 15.6 & 6.6\\
 &\multirow{2}{*}{100}
 & 1
 & 3.0 & 2.5 & 3.1 & 3.1 & \textbf{2.4} & 3.1 & 2.9 & 2.9 & 2.7 & 3.9 & 3.5\\
&& 10
 & 4.1 & 2.3 & 16.3 & 16.3 & *\textbf{2.8} & 6.2 & 4.0 & 16.3 & 3.6 & 17.2 & 12.0\\
 &\multirow{2}{*}{1000}
 & 1
 & 3.6 & 3.5 & 3.7 & 4.6 & 3.5 & 3.9 & 4.0 & 3.6 & \textbf{3.5} & 4.9 & 4.6\\
&& 10
 & 12.0 & 15.9 & 17.4 & 18.2 & 17.6 & 17.8 & 17.3 & 17.3 & \textbf{17.1} & 18.3 & 18.1\\
\hline
\multirow{8}{*}{\rotatebox[origin=c]{90}{Griewank}}
 &\multirow{2}{*}{2}
 & 1
 & 0.0 & 0.0 & 0.0 & 0.0 & 0.0 & 0.0 & 0.0 & \textbf{0.0} & 0.0 & 0.0 & 0.0\\
&& 10
 & 0.0 & 0.0 & 0.0 & 0.0 & 0.0 & 0.0 & 0.0 & 0.0 & 0.0 & 0.0 & \textbf{0.0}\\
 &\multirow{2}{*}{30}
 & 1
 & 0.0 & 0.0 & 0.1 & 0.0 & 0.0 & 0.0 & 0.0 & 0.0 & *\textbf{0.0} & 0.2 & 0.1\\
&& 10
 & 0.1 & 0.0 & 1.3 & 1.2 & 0.2 & 0.2 & 0.2 & 0.0 & \textbf{0.0} & 1.1 & 0.7\\
 &\multirow{2}{*}{100}
 & 1
 & 0.0 & 0.0 & 0.0 & 0.0 & 0.0 & 0.1 & 0.1 & 0.0 & *\textbf{0.0} & 0.3 & 0.2\\
&& 10
 & 0.2 & 0.0 & 2.3 & 2.3 & 0.0 & 0.3 & 0.3 & 0.0 & *\textbf{0.0} & 1.5 & 1.0\\
 &\multirow{2}{*}{1000}
 & 1
 & 0.2 & 0.2 & 0.2 & 1.0 & 0.2 & 0.6 & 0.6 & 0.3 & *\textbf{0.2} & 0.9 & 0.8\\
&& 10
 & 2.4 & 1.8 & 19.3 & 22.8 & 2.4 & 6.5 & 5.2 & 2.6 & *\textbf{2.2} & 20.3 & 10.5\\
\hline
\multirow{8}{*}{\rotatebox[origin=c]{90}{Rastrigin}}
 &\multirow{2}{*}{2}
 & 1
 & 0.0 & 0.0 & 0.0 & 0.0 & 0.0 & 0.0 & 0.0 & \textbf{0.0} & 0.0 & 0.0 & 0.0\\
&& 10
 & 0.0 & 0.2 & 6.2 & \textbf{0.1} & 1.0 & 0.2 & 0.1 & 1.3 & 0.2 & 2.8 & 0.2\\
 &\multirow{2}{*}{30}
 & 1
 & 27.8 & 26.3 & 27.6 & 34.7 & 27.3 & 173.2 & 166.3 & \textbf{26.9} & 28.3 & 90.4 & 62.2\\
&& 10
 & 210.4 & 113.5 & 1544.5 & 1539.1 & 320.9 & 302.1 & 306.3 & 1108.8 & *\textbf{150.0} & 1517.0 & 368.3\\
 &\multirow{2}{*}{100}
 & 1
 & 109.1 & 104.0 & 113.7 & 113.3 & 118.3 & 782.9 & 760.4 & \textbf{94.0} & 111.1 & 348.6 & 263.8\\
&& 10
 & 984.0 & 839.7 & 6949.1 & 6934.4 & 1612.9 & 1793.1 & 1229.1 & 4918.4 & \textbf{1149.7} & 7240.4 & 2590.7\\
 &\multirow{2}{*}{1000}
 & 1
 & 2001.4 & 1689.2 & 2781.4 & 6748.2 & 2113.8 & 4798.8 & 9735.7 & 2153.1 & *\textbf{1878.5} & 7107.0 & 6576.5\\
&& 10
 & 2.6e+04 & 2.9e+04 & 8.9e+04 & 9.5e+04 & \textbf{4.1e+04} & 7.8e+04 & 4.8e+04 & 8.9e+04 & 6.0e+04 & 9.6e+04 & 9.3e+04\\
\hline
\multirow{8}{*}{\rotatebox[origin=c]{90}{Rosenbrock}}
 &\multirow{2}{*}{2}
 & 1
 & 0.0 & 0.0 & 0.0 & 0.0 & 0.0 & 0.0 & 0.0 & 0.0 & \textbf{0.0} & 0.1 & 0.0\\
&& 10
 & 0.0 & 0.0 & 1.0 & 0.0 & 0.1 & 0.0 & 0.0 & 0.2 & \textbf{0.0} & 1.4 & 0.0\\
 &\multirow{2}{*}{30}
 & 1
 & 40.5 & 39.0 & 157.2 & 43.3 & 30.2 & 1027.3 & 1485.6 & \textbf{28.1} & 39.5 & 589.2 & 303.3\\
&& 10
 & 383.3 & 183.9 & 1.2e+07 & 4.4e+06 & 813.3 & 4371.1 & 742.3 & 581.5 & \textbf{199.0} & 1.6e+06 & 3.9e+04\\
 &\multirow{2}{*}{100}
 & 1
 & 146.8 & 113.1 & 179.6 & 184.9 & \textbf{112.9} & 191.0 & 6830.5 & 157.2 & 117.9 & 2817.8 & 1484.5\\
&& 10
 & 3171.6 & 290.3 & 6.5e+07 & 6.5e+07 & \textbf{517.8} & 6.2e+05 & 9.7e+04 & 945.9 & 943.1 & 1.5e+07 & 3.5e+05\\
 &\multirow{2}{*}{1000}
 & 1
 & 1.2e+04 & 8931.6 & 1.2e+04 & 2.1e+05 & 1.1e+04 & 2.9e+04 & 2.1e+04 & 1.3e+04 & \textbf{9698.8} & 1.7e+05 & 1.3e+05\\
&& 10
 & 2.0e+07 & 1.1e+07 & 1.4e+09 & 2.2e+09 & 1.8e+07 & 2.5e+08 & 9.6e+07 & 3.1e+07 & \textbf{1.4e+07} & 1.0e+09 & 4.9e+08\\
\hline
\multirow{8}{*}{\rotatebox[origin=c]{90}{Sphere}}
 &\multirow{2}{*}{2}
 & 1
 & 0.0 & 0.0 & 0.0 & 0.0 & 0.0 & 0.0 & 0.0 & 0.0 & 0.0 & \textbf{0.0} & 0.0\\
&& 10
 & 0.0 & 0.0 & 0.0 & 0.0 & 0.0 & 0.0 & 0.0 & \textbf{0.0} & 0.0 & 0.0 & 0.0\\
 &\multirow{2}{*}{30}
 & 1
 & 0.0 & 0.0 & 0.7 & 0.0 & 0.0 & 0.3 & 4.6 & 0.0 & \textbf{0.0} & 4.2 & 1.6\\
&& 10
 & 2.1 & 0.0 & 1297.7 & 797.9 & 0.3 & 35.8 & 15.4 & 0.0 & \textbf{0.0} & 500.6 & 30.9\\
 &\multirow{2}{*}{100}
 & 1
 & 0.1 & 0.0 & 0.1 & 0.1 & 0.0 & 0.0 & 0.3 & 0.0 & \textbf{0.0} & 17.6 & 7.7\\
&& 10
 & 9.6 & 0.0 & 5234.4 & 5234.5 & 0.2 & 409.6 & 37.2 & 0.2 & \textbf{0.0} & 1610.9 & 222.3\\
 &\multirow{2}{*}{1000}
 & 1
 & 57.1 & 31.8 & 74.8 & 731.6 & 55.3 & 93.5 & 86.1 & 65.9 & \textbf{45.1} & 665.5 & 555.1\\
&& 10
 & 5484.4 & 3158.5 & 7.3e+04 & 8.7e+04 & 5481.0 & 5.2e+04 & 2.4e+04 & 6540.3 & \textbf{4459.2} & 6.7e+04 & 3.8e+04\\
\hline
\multirow{8}{*}{\rotatebox[origin=c]{90}{Linear}}
 &\multirow{2}{*}{2}
 & 1
 & -1614.0 & -1654.6 & -53.4 & -842.3 & \textbf{-3.1e+29} & -1547.5 & -1497.4 & -2.7e+16 & -8.7e+18 & -1.6e+08 & -6.9e+05\\
&& 10
 & -1930.6 & -1969.2 & -393.6 & -1172.9 & \textbf{-3.1e+29} & -1887.1 & -1837.6 & -1.7e+16 & -8.7e+18 & -1.6e+08 & -6.9e+05\\
 &\multirow{2}{*}{30}
 & 1
 & -6173.2 & -6274.7 & -202.7 & -343.7 & \textbf{-1.2e+30} & -5995.3 & -5620.6 & -1.8e+18 & -2.8e+19 & -1.1e+08 & -2.7e+06\\
&& 10
 & -7389.3 & -7446.0 & -1485.7 & -1626.6 & \textbf{-1.2e+30} & -7277.5 & -6903.6 & -1.3e+18 & -2.8e+19 & -1.1e+08 & -2.7e+06\\
 &\multirow{2}{*}{100}
 & 1
 & -1.1e+04 & -1.1e+04 & -332.1 & -330.2 & \textbf{-2.5e+30} & -1.1e+04 & -1.0e+04 & -8.2e+17 & -1.2e+20 & -1.1e+10 & -4.9e+06\\
&& 10
 & -1.3e+04 & -1.3e+04 & -2346.8 & -2345.0 & \textbf{-2.5e+30} & -1.3e+04 & -1.2e+04 & -6.9e+18 & -1.2e+20 & -1.1e+10 & -5.0e+06\\
 &\multirow{2}{*}{1000}
 & 1
 & -2.6e+04 & -2.5e+04 & -759.0 & -536.5 & \textbf{-4.7e+30} & -2.4e+04 & -2.4e+04 & -2.2e+18 & -2.0e+20 & -1.2e+11 & -1.1e+07\\
&& 10
 & -3.0e+04 & -2.9e+04 & -5369.0 & -5146.5 & \textbf{-4.7e+30} & -2.9e+04 & -2.8e+04 & -2.4e+18 & -2.0e+20 & -1.2e+11 & -1.1e+07\\

\bottomrule

\end{tabular}

}

\caption{
A genetic algorithm's final elite function value,
on various functions and population initializations 
using different mutation rate control strategies.
This metric quantifies good the final solution
found by the GA is.
The results are averaged over 40 seeds.
The best value is shown in bold.
A statistical t-test is performed on the best method and if the resulting p-value
is less than $0.05$ versus \textit{all} other methods, 
the result is considered significant and 
shown with an asterisk (*) in front of it. 
Methods marked with \textdagger~are oracles for benchmark
and are not compared against because they use foresight during evolution.
GESMR outperforms previous methods on most tasks, often significantly.
}
\label{tbl::data_table}
\end{table*}

\begin{table*}[t]
\centering

\scalebox{0.7}{

\begin{tabular}{c | r r | r r r r r r r r r r r}
 &
 \multicolumn{1}{c}{Dim}&
 \multicolumn{1}{c}{Std}&
 \multicolumn{1}{c}{\textbf{\textsuperscript{\textdagger}OFMR}}&
 \multicolumn{1}{c}{\textbf{\textsuperscript{\textdagger}LAMR-100}}&
 \multicolumn{1}{c}{\textbf{FMR}}&
 \multicolumn{1}{c}{\textbf{1CMR}}&
 \multicolumn{1}{c}{\textbf{15MR}}&
 \multicolumn{1}{c}{\textbf{UCB/5}}&
 \multicolumn{1}{c}{\textbf{UCB/10}}&
 \multicolumn{1}{c}{\textbf{SAMR}}&
 \multicolumn{1}{c}{\textbf{GESMR}}&
 \multicolumn{1}{c}{\textbf{GESMR-AVG}}&
 \multicolumn{1}{c}{\textbf{GESMR-FIX}}
 \\
\toprule

\multirow{8}{*}{\rotatebox[origin=c]{90}{Ackley}}
 &\multirow{2}{*}{2}
 & 1
 & 0.1 & 0.1 & \textbf{0.0} & 0.1 & 0.0 & 0.0 & 0.0 & 0.0 & 0.0 & 0.0 & 0.0\\
&& 10
 & 1.2 & 1.2 & 2.2 & \textbf{0.2} & 0.3 & 0.3 & 0.4 & 0.3 & 0.5 & 0.3 & 0.4\\
 &\multirow{2}{*}{30}
 & 1
 & 1.9 & 1.3 & 3.1 & 2.9 & 1.7 & 2.6 & 3.1 & 2.7 & \textbf{1.7} & 3.6 & 3.5\\
&& 10
 & 8.7 & 5.9 & 15.7 & 15.4 & 7.4 & 8.2 & 7.8 & 12.4 & *\textbf{4.9} & 16.0 & 10.8\\
 &\multirow{2}{*}{100}
 & 1
 & 3.3 & 2.8 & 3.4 & 3.4 & \textbf{2.8} & 3.4 & 3.5 & 3.1 & 2.9 & 4.1 & 3.9\\
&& 10
 & 8.5 & 5.8 & 16.5 & 16.6 & 7.9 & 9.5 & 9.0 & 16.5 & *\textbf{6.8} & 17.4 & 14.3\\
 &\multirow{2}{*}{1000}
 & 1
 & 4.2 & 4.0 & 4.1 & 4.9 & 4.1 & 4.4 & 4.5 & 4.2 & \textbf{4.0} & 5.0 & 4.8\\
&& 10
 & 15.3 & 17.0 & 17.7 & 18.5 & 18.1 & 18.2 & 18.1 & 17.7 & *\textbf{17.6} & 18.4 & 18.3\\
\hline
\multirow{8}{*}{\rotatebox[origin=c]{90}{Griewank}}
 &\multirow{2}{*}{2}
 & 1
 & 0.0 & 0.0 & 0.0 & 0.0 & \textbf{0.0} & 0.0 & 0.0 & 0.0 & 0.0 & 0.0 & 0.0\\
&& 10
 & 0.0 & 0.0 & 0.0 & \textbf{0.0} & 0.0 & 0.0 & 0.0 & 0.0 & 0.0 & 0.0 & 0.0\\
 &\multirow{2}{*}{30}
 & 1
 & 0.1 & 0.1 & 0.3 & 0.1 & 0.1 & 0.2 & 0.2 & 0.1 & \textbf{0.1} & 0.4 & 0.3\\
&& 10
 & 0.7 & 0.5 & 1.4 & 1.3 & 0.8 & 0.7 & 0.8 & 0.6 & \textbf{0.5} & 1.2 & 1.0\\
 &\multirow{2}{*}{100}
 & 1
 & 0.2 & 0.1 & 0.2 & 0.2 & 0.1 & 0.2 & 0.2 & 0.1 & \textbf{0.1} & 0.5 & 0.4\\
&& 10
 & 1.3 & 0.6 & 2.5 & 2.5 & 0.7 & 0.9 & 1.0 & 0.8 & *\textbf{0.6} & 1.9 & 1.4\\
 &\multirow{2}{*}{1000}
 & 1
 & 0.5 & 0.5 & 0.6 & 1.1 & 0.6 & 0.8 & 0.8 & 0.6 & *\textbf{0.5} & 1.0 & 1.0\\
&& 10
 & 7.8 & 7.5 & 21.2 & 23.0 & 8.5 & 12.9 & 11.0 & 9.1 & \textbf{8.2} & 21.2 & 15.9\\
\hline
\multirow{8}{*}{\rotatebox[origin=c]{90}{Rastrigin}}
 &\multirow{2}{*}{2}
 & 1
 & 0.3 & 0.3 & 0.0 & 0.2 & \textbf{0.0} & 0.0 & 0.1 & 0.0 & 0.1 & 0.0 & 0.1\\
&& 10
 & 0.6 & 0.9 & 6.9 & \textbf{0.5} & 1.8 & 1.0 & 1.6 & 1.8 & 0.8 & 3.2 & 1.1\\
 &\multirow{2}{*}{30}
 & 1
 & 71.3 & 47.9 & 59.1 & \textbf{50.9} & 67.0 & 178.1 & 168.9 & 52.6 & 60.0 & 130.1 & 116.2\\
&& 10
 & 597.1 & 358.2 & 1606.6 & 1566.1 & 548.3 & 436.8 & 458.0 & 1204.7 & *\textbf{356.6} & 1575.0 & 677.7\\
 &\multirow{2}{*}{100}
 & 1
 & 301.5 & 182.1 & 218.9 & 215.0 & 227.6 & 796.5 & 772.1 & \textbf{181.3} & 198.4 & 543.1 & 483.8\\
&& 10
 & 2199.1 & 1500.1 & 7090.7 & 7080.6 & 2341.7 & 2303.2 & 1911.9 & 5164.7 & \textbf{1748.6} & 7433.1 & 3636.0\\
 &\multirow{2}{*}{1000}
 & 1
 & 4864.2 & 3918.3 & 4507.4 & 8541.2 & 4501.6 & 6713.5 & 9800.1 & 4520.6 & *\textbf{4187.1} & 8215.6 & 7901.7\\
&& 10
 & 4.5e+04 & 4.6e+04 & 9.1e+04 & 9.7e+04 & \textbf{5.7e+04} & 8.6e+04 & 6.4e+04 & 9.2e+04 & 6.7e+04 & 9.6e+04 & 9.4e+04\\
\hline
\multirow{8}{*}{\rotatebox[origin=c]{90}{Rosenbrock}}
 &\multirow{2}{*}{2}
 & 1
 & 0.0 & 0.0 & 0.0 & 0.0 & 0.0 & 0.0 & 0.0 & 0.0 & \textbf{0.0} & 0.1 & 0.0\\
&& 10
 & 0.9 & 1.1 & 5.8 & \textbf{0.5} & 1.1 & 1.3 & 2.0 & 1.7 & 0.7 & 2.0 & 1.0\\
 &\multirow{2}{*}{30}
 & 1
 & 485.7 & 262.3 & 977.9 & 386.4 & 323.8 & 1179.8 & 1643.0 & 339.9 & \textbf{243.9} & 1325.0 & 1064.5\\
&& 10
 & 2.1e+06 & 1.1e+06 & 1.5e+07 & 1.0e+07 & 1.1e+06 & \textbf{8.8e+05} & 1.2e+06 & 1.6e+06 & 9.1e+05 & 7.3e+06 & 2.2e+06\\
 &\multirow{2}{*}{100}
 & 1
 & 2685.3 & 1206.5 & 3769.2 & 3735.7 & 1390.2 & 1590.9 & 8349.1 & 1686.2 & \textbf{1360.1} & 7236.0 & 5483.7\\
&& 10
 & 1.9e+07 & 5.3e+06 & 9.3e+07 & 9.3e+07 & 6.4e+06 & 6.1e+06 & \textbf{5.6e+06} & 7.7e+06 & 5.8e+06 & 4.6e+07 & 1.7e+07\\
 &\multirow{2}{*}{1000}
 & 1
 & 6.2e+04 & 6.0e+04 & 8.9e+04 & 2.6e+05 & 7.3e+04 & 1.1e+05 & 8.8e+04 & 7.8e+04 & \textbf{6.6e+04} & 2.2e+05 & 2.0e+05\\
&& 10
 & 3.7e+08 & 3.7e+08 & 1.8e+09 & 2.2e+09 & 4.3e+08 & 1.2e+09 & 7.3e+08 & 4.7e+08 & \textbf{4.0e+08} & 1.5e+09 & 1.1e+09\\
\hline
\multirow{8}{*}{\rotatebox[origin=c]{90}{Sphere}}
 &\multirow{2}{*}{2}
 & 1
 & 0.0 & 0.0 & 0.0 & 0.0 & 0.0 & 0.0 & 0.0 & 0.0 & \textbf{0.0} & 0.0 & 0.0\\
&& 10
 & 0.1 & 0.1 & 0.1 & \textbf{0.0} & 0.0 & 0.0 & 0.0 & 0.0 & 0.0 & 0.0 & 0.0\\
 &\multirow{2}{*}{30}
 & 1
 & 3.2 & 1.8 & 6.3 & 2.2 & 1.8 & 3.1 & 5.9 & 1.8 & \textbf{1.5} & 8.6 & 5.9\\
&& 10
 & 318.7 & 180.9 & 1420.8 & 1153.4 & 182.3 & 162.9 & 163.5 & 212.7 & \textbf{142.9} & 1027.2 & 327.3\\
 &\multirow{2}{*}{100}
 & 1
 & 18.9 & 5.1 & 18.8 & 18.7 & 6.3 & 9.7 & 15.9 & 7.0 & \textbf{6.3} & 37.4 & 27.2\\
&& 10
 & 1892.3 & 512.6 & 6089.4 & 6088.1 & 668.8 & 1019.9 & 779.4 & 755.9 & \textbf{604.3} & 3600.2 & 1758.3\\
 &\multirow{2}{*}{1000}
 & 1
 & 270.4 & 259.4 & 382.4 & 809.3 & 304.2 & 314.0 & 319.9 & 327.1 & \textbf{286.3} & 772.1 & 711.8\\
&& 10
 & 2.7e+04 & 2.6e+04 & 8.1e+04 & 8.8e+04 & 3.0e+04 & 6.8e+04 & 4.6e+04 & 3.3e+04 & \textbf{2.9e+04} & 7.8e+04 & 6.0e+04\\
\hline
\multirow{8}{*}{\rotatebox[origin=c]{90}{Linear}}
 &\multirow{2}{*}{2}
 & 1
 & -836.1 & -835.5 & -45.4 & -434.1 & \textbf{-6.0e+27} & -766.9 & -710.5 & -1.2e+15 & -3.0e+17 & -1.2e+07 & -3.5e+05\\
&& 10
 & -1153.3 & -1150.8 & -385.6 & -764.9 & \textbf{-6.0e+27} & -1106.6 & -1050.8 & -7.8e+14 & -3.0e+17 & -1.2e+07 & -3.5e+05\\
 &\multirow{2}{*}{30}
 & 1
 & -3190.9 & -3238.2 & -171.9 & -240.2 & \textbf{-2.4e+28} & -2967.6 & -2671.8 & -5.5e+16 & -9.5e+17 & -1.1e+07 & -1.4e+06\\
&& 10
 & -4409.1 & -4413.7 & -1454.9 & -1523.2 & \textbf{-2.4e+28} & -4249.9 & -3954.8 & -3.9e+16 & -9.5e+17 & -1.2e+07 & -1.4e+06\\
 &\multirow{2}{*}{100}
 & 1
 & -5758.9 & -5665.5 & -277.3 & -275.9 & \textbf{-4.7e+28} & -5314.8 & -4956.6 & -2.9e+16 & -5.0e+18 & -1.4e+09 & -2.6e+06\\
&& 10
 & -7593.1 & -7473.6 & -2292.1 & -2290.6 & \textbf{-4.7e+28} & -7323.8 & -6971.4 & -2.2e+17 & -5.0e+18 & -1.4e+09 & -2.6e+06\\
 &\multirow{2}{*}{1000}
 & 1
 & -1.3e+04 & -1.3e+04 & -634.3 & -524.3 & \textbf{-9.3e+28} & -1.2e+04 & -1.1e+04 & -7.4e+16 & -6.3e+18 & -1.8e+10 & -5.7e+06\\
&& 10
 & -1.7e+04 & -1.7e+04 & -5244.2 & -5134.2 & \textbf{-9.3e+28} & -1.7e+04 & -1.6e+04 & -8.7e+16 & -6.3e+18 & -1.8e+10 & -5.7e+06\\

\bottomrule

\end{tabular}

}

\caption{
A genetic algorithm's \textit{average} (over generations) elite function value,
on various functions and population initializations 
using different mutation rate control strategies.
This metric quantifies how quickly the GA converged to good solutions.
The results are averaged over 40 seeds.
The best value is shown in bold.
A statistical t-test is performed on the best method and if the resulting p-value
is less than $0.05$ versus \textit{all} other methods, 
the result is considered significant and 
shown with an asterisk (*) in front of it. 
Methods marked with \textdagger~are oracles for benchmark
and are not compared against because they use foresight during evolution.
GESMR outperforms previous methods on most tasks, often significantly.
15MR outperforms GESMR in the Linear function landscapes
because 15MR directly doubles the MR every generation
while GESMR relies on a mutation that may double the MRs every generation.
}
\label{tbl::data_table_auc}
\end{table*}

\begin{table*}[t]
\centering

\scalebox{0.7}{

\begin{tabular}{c | r r | r r r r r r r r r r r}
 &
 \multicolumn{1}{c}{Dim}&
 \multicolumn{1}{c}{Std}&
 \multicolumn{1}{c}{\textbf{\textsuperscript{\textdagger}OFMR}}&
 \multicolumn{1}{c}{\textbf{\textsuperscript{\textdagger}LAMR-100}}&
 \multicolumn{1}{c}{\textbf{FMR}}&
 \multicolumn{1}{c}{\textbf{1CMR}}&
 \multicolumn{1}{c}{\textbf{15MR}}&
 \multicolumn{1}{c}{\textbf{UCB/5}}&
 \multicolumn{1}{c}{\textbf{UCB/10}}&
 \multicolumn{1}{c}{\textbf{SAMR}}&
 \multicolumn{1}{c}{\textbf{GESMR}}&
 \multicolumn{1}{c}{\textbf{GESMR-AVG}}&
 \multicolumn{1}{c}{\textbf{GESMR-FIX}}
 \\
\toprule

\multirow{8}{*}{\rotatebox[origin=c]{90}{Ackley}}
 &\multirow{2}{*}{2}
 & 1
 & 0.2 & 0.0 & *\textbf{5.3} & 38.6 & 29.0 & 31.5 & 44.2 & 44.8 & 15.5 & 65.1 & 47.3\\
&& 10
 & 0.7 & 0.0 & *\textbf{1.4} & 9.5 & 20.0 & 13.7 & 12.0 & 61.0 & 10.2 & 95.3 & 14.0\\
 &\multirow{2}{*}{30}
 & 1
 & 4.1 & 0.0 & 2.3 & 1.9 & 1.9 & 9.9 & 8.6 & 4.4 & \textbf{1.2} & 14.2 & 16.3\\
&& 10
 & 1.1 & 0.0 & 8.3 & 3.2 & 2.3 & 7.0 & 6.3 & 12.6 & \textbf{1.4} & 29.4 & 4.5\\
 &\multirow{2}{*}{100}
 & 1
 & 9.4 & 0.0 & 3.2 & 3.2 & 4.8 & 9.9 & 12.7 & 1.4 & \textbf{0.9} & 29.1 & 34.1\\
&& 10
 & 7.1 & 0.0 & 6.0 & 6.0 & 3.7 & 9.0 & 8.5 & 13.9 & *\textbf{0.4} & 77.1 & 14.7\\
 &\multirow{2}{*}{1000}
 & 1
 & 0.5 & 0.0 & 1.0 & 3.0 & 0.5 & 3.2 & 4.1 & 0.7 & *\textbf{0.2} & 105.4 & 28.7\\
&& 10
 & 11.7 & 0.0 & 3.1 & 9.0 & 14.5 & 16.5 & 11.7 & 5.1 & \textbf{3.0} & 89.9 & 23.1\\
\hline
\multirow{8}{*}{\rotatebox[origin=c]{90}{Griewank}}
 &\multirow{2}{*}{2}
 & 1
 & 0.0 & 0.0 & \textbf{5.3} & 38.6 & 58.5 & 17.3 & 16.4 & 16.0 & 20.3 & 15.5 & 47.6\\
&& 10
 & 0.5 & 0.0 & 7.9 & \textbf{2.8} & 8.1 & 9.5 & 7.9 & 76.5 & 5.1 & 76.0 & 5.0\\
 &\multirow{2}{*}{30}
 & 1
 & 1.2 & 0.0 & 1.3 & 1.7 & 3.1 & 7.9 & 7.7 & 0.8 & \textbf{0.6} & 8.4 & 18.6\\
&& 10
 & 2.2 & 0.0 & 6.9 & 2.8 & 4.1 & 8.6 & 7.7 & 0.9 & *\textbf{0.5} & 8.8 & 6.8\\
 &\multirow{2}{*}{100}
 & 1
 & 1.9 & 0.0 & 1.5 & 1.5 & 0.8 & 8.9 & 8.1 & 0.7 & *\textbf{0.2} & 9.0 & 23.1\\
&& 10
 & 2.6 & 0.0 & 5.6 & 5.6 & 0.9 & 7.8 & 8.0 & 0.7 & *\textbf{0.2} & 6.3 & 10.2\\
 &\multirow{2}{*}{1000}
 & 1
 & 0.3 & 0.0 & 0.4 & 7.4 & 0.4 & 4.2 & 3.7 & 1.0 & \textbf{0.2} & 37.1 & 18.0\\
&& 10
 & 0.4 & 0.0 & 8.0 & 26.0 & 0.5 & 7.7 & 7.3 & 0.7 & *\textbf{0.2} & 151.9 & 3.7\\
\hline
\multirow{8}{*}{\rotatebox[origin=c]{90}{Rastrigin}}
 &\multirow{2}{*}{2}
 & 1
 & 0.0 & 0.0 & *\textbf{5.3} & 38.6 & 39.7 & 30.1 & 44.2 & 21.9 & 12.6 & 25.4 & 47.5\\
&& 10
 & 0.8 & 0.0 & 9.7 & \textbf{0.9} & 11.7 & 3.2 & 3.1 & 80.6 & 6.0 & 99.4 & 2.6\\
 &\multirow{2}{*}{30}
 & 1
 & 2.1 & 0.0 & 2.4 & 6.0 & 2.8 & 22.1 & 24.4 & 1.4 & \textbf{1.2} & 7.0 & 31.9\\
&& 10
 & 8.6 & 0.0 & 6.4 & 4.7 & 16.9 & 16.0 & 16.2 & 6.1 & *\textbf{0.8} & 19.2 & 15.6\\
 &\multirow{2}{*}{100}
 & 1
 & 1.6 & 0.0 & 3.1 & 3.1 & 0.9 & 22.5 & 22.0 & 1.6 & \textbf{0.8} & 6.9 & 36.5\\
&& 10
 & 15.8 & 0.0 & 5.8 & 5.8 & 25.6 & 28.0 & 23.2 & 4.0 & *\textbf{0.9} & 27.4 & 27.9\\
 &\multirow{2}{*}{1000}
 & 1
 & 0.4 & 0.0 & 1.1 & 2.6 & 0.5 & 3.3 & 18.5 & 0.7 & \textbf{0.2} & 32.4 & 29.9\\
&& 10
 & 6.1 & 0.0 & 5.4 & 17.2 & 8.4 & 13.6 & 10.0 & 9.5 & *\textbf{2.9} & 118.7 & 13.5\\
\hline
\multirow{8}{*}{\rotatebox[origin=c]{90}{Rosenbrock}}
 &\multirow{2}{*}{2}
 & 1
 & 0.5 & 0.0 & \textbf{1.9} & 19.6 & 19.8 & 10.2 & 24.0 & 5.7 & 12.2 & 36.4 & 25.8\\
&& 10
 & 1.6 & 0.0 & 8.8 & \textbf{3.7} & 16.3 & 7.9 & 10.5 & 19.2 & 12.6 & 54.5 & 6.0\\
 &\multirow{2}{*}{30}
 & 1
 & 2.2 & 0.0 & 2.0 & 2.5 & 2.3 & 12.7 & 17.1 & 1.0 & \textbf{0.6} & 10.0 & 19.3\\
&& 10
 & 1.5 & 0.0 & 8.2 & 3.3 & 2.5 & 5.5 & 4.7 & 1.3 & \textbf{0.8} & 10.3 & 5.3\\
 &\multirow{2}{*}{100}
 & 1
 & 2.3 & 0.0 & 1.8 & 1.8 & 0.8 & 3.2 & 13.8 & 0.8 & \textbf{0.5} & 8.5 & 23.5\\
&& 10
 & 2.2 & 0.0 & 5.5 & 5.5 & 0.9 & 9.3 & 4.2 & 0.7 & *\textbf{0.3} & 10.3 & 9.4\\
 &\multirow{2}{*}{1000}
 & 1
 & 0.5 & 0.0 & 0.4 & 7.2 & 0.4 & 1.0 & 0.9 & 0.7 & \textbf{0.2} & 35.0 & 18.6\\
&& 10
 & 0.4 & 0.0 & 7.6 & 25.3 & 0.4 & 2.5 & 1.5 & 0.8 & *\textbf{0.2} & 15.0 & 3.8\\
\hline
\multirow{8}{*}{\rotatebox[origin=c]{90}{Sphere}}
 &\multirow{2}{*}{2}
 & 1
 & 0.5 & 0.0 & *\textbf{4.1} & 35.0 & 35.9 & 31.1 & 17.8 & 65.9 & 11.0 & 205.5 & 43.3\\
&& 10
 & 0.4 & 0.0 & *\textbf{0.4} & 12.1 & 19.0 & 14.0 & 7.9 & 53.7 & 9.7 & 106.2 & 17.2\\
 &\multirow{2}{*}{30}
 & 1
 & 3.0 & 0.0 & 2.2 & 4.0 & 2.6 & 7.0 & 15.7 & \textbf{0.6} & 0.7 & 8.8 & 24.6\\
&& 10
 & 2.7 & 0.0 & 6.8 & 3.0 & 2.1 & 8.0 & 6.9 & 1.0 & \textbf{0.7} & 10.2 & 7.7\\
 &\multirow{2}{*}{100}
 & 1
 & 3.2 & 0.0 & 3.2 & 3.2 & 1.2 & 2.5 & 4.7 & 0.6 & *\textbf{0.5} & 7.2 & 30.8\\
&& 10
 & 4.8 & 0.0 & 5.7 & 5.7 & 1.5 & 14.6 & 5.5 & 0.6 & \textbf{0.5} & 7.5 & 14.4\\
 &\multirow{2}{*}{1000}
 & 1
 & 0.4 & 0.0 & 0.5 & 7.9 & 0.4 & 1.0 & 1.0 & 0.7 & \textbf{0.2} & 54.6 & 17.4\\
&& 10
 & 0.4 & 0.0 & 8.1 & 26.2 & 0.4 & 3.5 & 1.8 & 0.7 & \textbf{0.2} & 61.1 & 3.6\\
\hline
 
\bottomrule

\end{tabular}

}

\caption{
A genetic algorithm's mean squared error log MR compared to empirical estimate of
the long-term optimal log MR (the log MR from LAMR-100),
on various functions and population initializations 
using different mutation rate control strategies.
This metric quantifies how optimal (lower is better) the MRs produced are for the long-term.
The results are averaged over 40 seeds.
The best value is shown in bold.
A statistical t-test is performed on the best method and if the resulting p-value
is less than $0.05$ versus \textit{all} other methods, 
the result is considered significant and 
shown with an asterisk (*) in front of it. 
Methods marked with \textdagger~are oracles for benchmark
and are not compared against because they use foresight during evolution.
GESMR consistently outperforms other methods,
showing that GESMR is producing MRs optimal for the long-term.
The Linear function is not shown because LAMR-100 is not
able to produce the true optimal MR (goes to infinity),
so comparisons to LAMR-100 in a Linear function does not make sense.
}
\label{tbl::data_table_mselog}
\end{table*}

\section{Details of the Function Optimization Experiment}

Detailed definitions of the test functions are given in this Appendix, followed by detailed results.

\subsection{Test Function Definitions}

\begin{description}
\vspace*{1ex}
    \item \textbf{Ackley}:
\vspace*{-1ex}
\begin{align}
    f(x) = 
    &-a\exp{\akparen{-b\sqrt{\frac1d \sum_{i=1}^d x_i^2}}}\\
    &-\exp{\akparen{\frac1d \sum_{i=1}^d \cos{(cx_i)}}}+a+\exp{(1)},
\end{align}
with $a=20, b=0.2, c=2\pi$.

\vspace*{1ex}
\item \textbf{Griewank}:
\vspace*{-1ex}
\begin{align}
    f(x) = &\sum_{i=1}^d\frac{x_i^2}{4000}-\prod_{i=1}^d \cos{\akparen{\frac{x_i}{\sqrt{i}}}}+1.
\end{align}

\vspace*{1ex}
\item \textbf{Rastrigin}:
\vspace*{-1ex}
\begin{align}
    f(x) = &10d+\sum_{i=1}^d\akbrack{x_i^2-10\cos{(2\pi x_i)}}.
\end{align}

\vspace*{1ex}
\item \textbf{Rosenbrock}:
\vspace*{-1ex}
\begin{align}
    f(x) = &\sum_{i=1}^{d-1} \akbrack{100(x_{i+1}-x_i^2)^2+(x_i-1)^2}.
\end{align}

\vspace*{1ex}
\item \textbf{Sphere}:
\vspace*{-1ex}
\begin{align}
    f(x) = &\sum_{i=1}^d x_i^2.
\end{align}

\vspace*{1ex}
\item \textbf{Linear}:
\vspace*{-1ex}
\begin{align}
    f(x) = &\sum_{i=1}^d x_i.
\end{align}

\end{description}

\subsection{Function Optimization Results}
The full results of the test optimization functions
are shown in 
Tables~\ref{tbl::data_table},~\ref{tbl::data_table_auc},~\ref{tbl::data_table_mselog}.
Table~\ref{tbl::data_table} summarizes
the final elite function value achieved by each algorithm in all the 
test function optimization runs.
Table~\ref{tbl::data_table_auc} summarizes
the average elite function value over generations 
from each algorithm in all the 
test function optimization runs.
Table~\ref{tbl::data_table_mselog} summarizes
the mean squared error between the average log MR
of a given algorithm with the log MR of
LAMR-$100$ (the oracle long-term MR).
These result show that
GESMR outperforms other methods 
in the high dimensional and rugged function
landscapes.
GESMR also produces MRs that match the oracle
long-term optimal MR,
showing that GESMR empirically produces MRs
suited for the long-term.
GESMR also scales well to the high dimensions
of neuroevolution.

\clearpage
\clearpage
\clearpage
\clearpage

\section{Details of the Image Classification Experiment}
MNIST and Fashion-MNIST are common image classification
\linebreak
datasets of hand written digits and clothes, respectively 
\cite{LeCun1998-wt, Xiao2017-kj}.
The inputs are 28$\times$x28 grayscale images and the output is one of ten classification labels.
Both datasets consists of 60,000 training images 10,000 evaluation images.
For these problems, $f$ is the 
negative log-likelihood function (i.e.\ the cross-entropy loss) 
as is common in supervised learning.

The evolved neural-network architecture contains three 3$\times$3
\linebreak
Conv2D layers with 10 channels, each one followed
by a 2$\times$2 MaxPooling layer and a ReLU nonlinearity.
The resulting feature maps are collapsed into a vector
and fed into a 10$\times$10 Dense layer followed by a 
ReLU and another 10$\times$10 Dense layer. after which they are fed into a Softmax function to output ten class probabilities.

\section{Details of the Reinforcement Learning Experiment}
CartPole, Pendulum, Acrobot, and MountainCar
are common reinforcement learning control tasks.
In each of these tasks, the performance of a robot controller is evaluated
in a simulated environment \cite{Brockman2016-bx}.
CartPole consists of balancing a single pole on a one-dimensional cart
for as long as possible or until 200 timesteps have passed,
rewarded for how long the pole stays up.
Pendulum consists of a robot trying to swing up a pendulum,
rewarded for maintaining as much of an upward angle
as possible.
Acrobot consists of moving a joint with two links
such that the bottom link swings to as high as possible.
MountainCar consists of a car with a weak engine in valley between two hills;
it must be moved back and forth between the hills to gain enough energy to reach the top of the target hill.
In all environments, $f$ is the 
negative cumulative reward of an episode 
(averaged over five episodes).

The evolved neural-network architecture contains a dense layer to 
map the number of observations to 128 hidden neurons with a
ReLU activation function, and another dense layer mapping the 128 neurons to the number of actions.
If the action space is discrete, a Softmax function 
is applied to output action probabilities.

The detailed results are shown in Figure~\ref{fig:fits_mrs_rl}.

\begin{figure}[t]
\centering
\includegraphics[width=\columnwidth]{imgs/fits_mrs_rl.png}
\caption{
Elite function value and average mutation rate (MR)
over generations of neuroevolution
using different MR control strategies
applied to the reinforcement learning control tasks
of CartPole, Pendulum, Acrobot, and MountainCar.
GESMR outperforms most other methods in CartPole,
Pendulum, and Acrobot, but fails in MountainCar.
}
\label{fig:fits_mrs_rl}
\end{figure}

\bibliographystyle{ACM-Reference-Format}
\bibliography{sample-base}